%% file: main.tex
\documentclass[conference]{IEEEtran}

\usepackage{times}
\let\labelindent\relax
\usepackage{enumitem}

\usepackage{adjustbox}
\usepackage[numbers]{natbib}
\usepackage{multicol}
\usepackage[bookmarks=true]{hyperref}
\usepackage{graphicx} 
\usepackage{hyperref}
\usepackage{xcolor}
\definecolor{LightGray}{gray}{0.95}
\usepackage{fancyvrb, minted}

\usepackage{amssymb}
\usepackage{amsmath}
\usepackage{graphicx}
\usepackage{caption}
\usepackage[dvipsnames]{xcolor}
\usepackage{subcaption}
\graphicspath{ {images/} }
\usepackage{float}
\usepackage{amsthm}
\usepackage{listings}
\usepackage{algorithm,algcompatible}
\makeatletter
\def\munderbar#1{\underline{\sbox\tw@{$#1$}\dp\tw@\z@\box\tw@}}
\makeatother

\usepackage{algorithm}
\usepackage{algorithmicx}
\usepackage{algpseudocode}
\usepackage{algpseudocode}
\usepackage{makecell}
\usepackage{booktabs}       
\usepackage{url}
\usepackage{multirow}
\usepackage{lipsum}
\usepackage{bbm}
\usepackage{soul}
\usepackage{mathtools}

\usepackage{tikz}
\usetikzlibrary{arrows.meta, positioning}

\usepackage{nomencl}
\makenomenclature

\newtheorem{theorem}{Theorem}

\newtheorem{remark}{Remark}

\usepackage{lipsum}  
\usepackage{stmaryrd}

\newcommand{\R}{\mathbb{R}}

\begin{document}

\title{Betting for Sim-to-Real Performance Evaluation}


\author{\authorblockN{Zaid Mahboob\authorrefmark{1}\authorrefmark{2},
Yujia Chen\authorrefmark{1}\authorrefmark{2},
and
Bowen Weng\authorrefmark{1}\authorrefmark{3}}
\authorblockA{\authorrefmark{1}Department of Computer Science\\
Iowa State University,
Ames, Iowa 50011\\ Email: \{zaidm, yjchen, bweng\}@iastate.edu}
\authorblockA{\authorrefmark{2}Equal contribution} \ \ \authorblockA{\authorrefmark{3}Corresponding author}}


%

\maketitle

\begin{abstract}
This paper studies the problem of robot performance evaluation, focusing on how to obtain accurate and efficient estimates of real-world behavior under severe constraints on physical experimentation. Such estimates are essential for benchmarking algorithms, comparing design alternatives, validating controllers, and supporting certification or regulatory decision-making, yet real-world testing with physical robots is often expensive, time-consuming, and safety-limited. To mitigate the scarcity of real-world trials, sim-to-real methodologies are commonly employed, using low-cost simulators to inform, supplement, or prioritize physical experiments. Departing from (and complementary to) existing approaches in variance reduction (e.g., importance-sampling variants) or bias-correction (e.g., through prediction-powered inference or learned control variates), we examine this performance-evaluation problem through the lens of \emph{betting}. We establish theoretical conditions under which a betting mechanism can yield accurate and efficient estimates (provably outperforming the Monte Carlo estimator) and we characterize how such bets should be constructed. We further develop theoretically grounded yet practically implementable approximations of the ideal bet, and we provide concrete decision rules that diagnose when these approximate betting strategies are working as intended. We demonstrate the effectiveness of the proposed methods using both synthetic examples and cross-fidelity computational simulators. Notably, we also showcase an illustrative case in which a group of synthetic distributions are used to infer the real-world pick-and-place accuracy of a robotic manipulator, a seemingly unconventional sim-to-real transfer that becomes natural and feasible under the proposed betting perspective. Programs for reproducing empirical results are available at \href{https://github.com/ISUSAIL/Bet4Sim2Real}{https://github.com/ISUSAIL/Bet4Sim2Real}.
\end{abstract}

\IEEEpeerreviewmaketitle

\section*{DISCLAIMER\footnote{This work was supported in part by Presidential Interdisciplinary Research Seed Grant at Iowa State University and in part by financial assistance award 70NANB25H125 from U.S. Department of Commerce, National Institute of Standards and Technology.}\footnote{A modified version of this work has been accepted for publication at Robotics: Science and Systems (RSS) 2026. This preprint mostly preserves the manuscript in its original form, as initially submitted to RSS for review. Zaid Mahboob and Yujia Chen are co-first authors with equal contribution; their listed order is reversed in the published RSS version.}}
In this paper, terms such as ``gambler'', ``gambling'', ``betting'', and ``bets'' are used strictly as theoretical metaphors within the context of robot testing and performance evaluation. They refer only to abstract mathematical constructs for sequential decision-making and uncertainty modeling. Under no circumstances does this work involve, promote, or relate to real-world financial gambling, monetary transactions, or any form of betting activity, within or outside the robotics context.

\section{Introduction}
Although rarely stated explicitly, roboticists often behave much like ``gamblers'', at least occasionally, and in a constructive sense. One develops theory-inspired or data-driven algorithms, controllers, software pipelines, and hardware stacks largely within simulators or synthetic computational-aided environments, and then deploys them on real machines with the hope, though rarely with a formal guarantee, that performance will carry over. This transfer is guided by an implicit and continually updated notion of confidence: as real-world data accumulate, our trust in the simulator's predictions is revised, reinforced, or corrected. To the best of the authors' knowledge, this paper represents the first attempt to \emph{formalize and theorize} this longstanding practice of ``\emph{betting}'' within sim-to-real workflows in robotics.

\subsection{The sim-to-real performance evaluation problem}
We focus on a specific yet fundamental aspect of sim-to-real transfer: \emph{performance evaluation}~\cite{feng2021intelligent,kadian2020sim2real,muratore2019assessing,weng2024towards,Weng2025Rethink}. Given a \emph{fixed} policy, controller, or robotic system, whether model-based, learning-based, or hybrid, as it stands, one seeks to \emph{efficiently} obtain an \emph{accurate} estimate of its expected performance under a certain real-world environment. This challenge arises in almost every corner of robotics: benchmarking learning algorithms~\cite{agarwal2021deep,duan2016benchmarking,kress2024robot}, evaluating safety-critical behaviors~\cite{weng2023comparability,kalra2016driving}, validating controllers~\cite{feng2023dense,roth1976performance}, comparing algorithmic or policy variants~\cite{vincent2024generalizable,colas2019hitchhiker}, or supporting certification and regulatory processes~\cite{Weng2025Rethink,roth1976performance,ansi_ria_r1505,iso9283,van2017euro}. 

Formally speaking, consider a probability distribution $P$ over the measurable space $(\mathcal{X},\mathcal{F})$. Let us further assume, albeit somewhat unrealistically in general but quite common in robotics, that $P$ represents an \emph{expensive} real-world data-generating distribution of various states concerning the robot being tested as well as environmental factors.

Given a bounded performance scoring function $\psi: \mathcal{X} \rightarrow \mathcal{M} \subset \R$, consider the performance evaluation target of inference as the scalar mean
\begin{equation}\label{eq:mu}
    \mu \triangleq \mathbb{E}_{x \sim P}[\psi(x)].
\end{equation}
When $\psi$ is a Boolean indicator of failure versus non-failure, $\mu$ reduces to the risk (i.e., the failure probability)~\cite{feng2021intelligent,Weng2025Rethink,kalra2016driving}. In practice, many other performance measures fall within the same expectation-based formulation, such as normalized rewards~\cite{sutton1998reinforcement,peng2018sim}, bounded stability scores~\cite{weng2023comparability, hyon2006passivity}, or clipped cost signals. W.l.o.g., we take $\mathcal{M} = [0,1]$ (via an affine rescaling/normalization of $\psi$ if necessary~\cite{boyd2004convex}) for the remainder of the analysis.

In general, to estimate the mean $\mu$, we have the Monte Carlo inference~\cite{metropolis1949monte,metropolis1953equation,hastings1970monte} as
\begin{equation}\label{eq:mc}
    \hat{\mu}_{\text{MC}} = \frac{1}{n}\sum_1^n \psi(x_i), x_i \sim P \text{ i.i.d.}
\end{equation}
Despite the theoretical completeness of Monte Carlo estimation (e.g., $\lim_{n\rightarrow\infty}\hat{\mu}_{\mathrm{MC}}=\mu$ almost surely~\cite{metropolis1953equation,hammersley1964monte}), plain Monte Carlo is often not a practically ``efficient'' evaluation method, especially when extreme-performance outcomes occur with very small probability under $P$ (the well-known long-tail or ``curse of rarity'' phenomenon~\cite{liu2024curse}). This inefficiency becomes particularly pronounced when $P$ denotes the distribution of states induced by \emph{real-world} trials on \emph{real robots}, where tests are costly, time-consuming, safety-constrained, and often subject to hardware wear, environmental variability, or operational risk~\cite{weng2024towards, kalra2016driving, afzal2020study,koopman2016challenges}. 

\emph{Simulators}, therefore, could play a crucial role. They offer abundant, inexpensive, and controlled observations that can supplement or guide real-world testing. Consider a parameterized \emph{simulator} as a family of other distributions $\{\!Q_{\theta}\!:\!\theta\!\in\!\Theta\!\}$ over the same measurable space mentioned above. By configuring $\theta$ differently, one can have different distributions of $Q$. Contrary to the real-world distribution $P$, let's further assume $Q_{\theta}$ corresponds to a family of relatively ``cheaper'' distributions from which we can affordably draw large numbers of samples. This setup motivates our central question: \emph{can we leverage the low cost and flexible configurability of $Q_{\theta}$ to perform accurate and efficient inference on $P$ with theoretical guarantees?}


\subsection{Literature review}
We note that this paper is not the first attempt to address the above research question. A substantial body of prior work has explored related directions, generally falling into two complementary perspectives~\cite{asmussen2007rare}: (i) \emph{variance reduction}, and (ii) \emph{bias correction}. 

The variance-reduction perspective typically relies on \emph{weighted sampling}, with \emph{importance sampling} (IS) as a canonical example~\cite{kahn1951estimation,asmussen2007stochastic,o2018scalable}:
\begin{equation}\label{eq:is}
    \hat{\mu}_{\text{IS}} = \frac{1}{n}\sum_{i=1}^n \psi(x_i)\frac{p(x_i)}{p'(x_i)}, 
    \qquad x_i \sim P' \text{ i.i.d.}
\end{equation}
Here, the auxiliary distribution \(P'\) is chosen to oversample regions that contribute most to \(\mu\). Under standard regularity conditions~\cite{asmussen2007stochastic,chatterjee2018sample}, IS yields unbiased estimators with reduced variance relative to vanilla Monte Carlo~\eqref{eq:mc}. In robotics, however, both \(P\) and \(P'\) typically correspond to real-world data collection, with the simulator \(Q_\theta\) serving mainly to guide the construction of a better \(P'\)~\cite{feng2021intelligent,weng2023comparability, feng2023dense,koren2018adaptive}. Consequently, the most effective simulator under this view is one that closely matches \(P\), since tighter alignment enables more accurate variance reduction.

Bias-correction approaches instead seek to implicitly~\cite{chebotar2019closing} or explicitly~\cite{kadian2020sim2real} model and correct the discrepancy between the real distribution \(P\) and a simulator \(Q_\theta\). These methods interpret the bias $\mathbb{E}_{P}[\psi] - \mathbb{E}_{Q_\theta}[\psi]$ as a structured, sample-dependent quantity that can be learned or parameterized. Concretely, one estimates a correction function $b_\theta(x)$, for example via paired real–sim or predictor–real evaluations as in prediction-powered inference (PPI)~\citep{angelopoulos2023prediction, badithela2025reliable}, or via learned correlators or control variates~\cite{ranganath2014black,luo2025leveraging}. The correction is then used to construct a bias-adjusted estimator, either by evaluating corrected simulator samples or by calibrating predictor-based estimates using real data, yielding an estimator of the generic form
\begin{equation}
    \hat{\mu}_{\mathrm{BC}}
    =
    \frac{1}{n}\sum_{i=1}^{n}\bigl(\psi(x_i) + b_\theta(x_i)\bigr),
    \qquad x_i \sim Q_\theta \text{ i.i.d.},
\end{equation}
where this expression represents a canonical simulator-centric bias-correction form; predictor-based variants such as PPI can be written in an equivalent correction-based form. Although conceptually distinct from IS, bias-correction methods share a similar implication: simulator effectiveness hinges on fidelity to $P$. The closer \(Q_\theta\) aligns with the real distribution, the more reliable the correction \(b_\theta\) becomes.

Note neither approach mentioned above fundamentally exploits the scalability of modern robotic simulation: simulated data grows cheaply and rapidly, while real-world data remains scarce, preventing simulation abundance from translating into proportional gains in reliable real-world performance estimation~\cite{kalra2016driving,koopman2016challenges,asmussen2007stochastic,angelopoulos2023prediction}. Moreover, from a theoretical standpoint, these lines of work suggest a common conclusion: for \emph{performance evaluation}, a ``good'' simulator is one that closely approximates reality. Yet this conclusion contrasts sharply with another highly successful paradigm in sim-to-real robotics. In \emph{policy transfer}, where the policy is continually learned and adapted in the hope of achieving high performance in the real world, ``wrong'' but diverged simulators have enabled dramatic progress by deliberately embracing mismatch. Techniques like domain randomization~\cite{peng2018sim,tobin2017domain,tan2018sim,muratore2021data,akkaya2019solving,huang2024collision,pan2009survey}, adversarial perturbations~\cite{weng2023comparability, pinto2017robust}, and robustness-oriented training inject structured variability to promote generalization across the reality gap, despite limited theoretical understanding~\cite{kadian2020sim2real, muratore2021data}.



What if the very ingredients that drive the success of robust sim-to-real policy transfer, its deliberate distributional mismatch, structured variability, and controlled uncertainty, are in fact \emph{beneficial} for performance evaluation as well, and in certain regimes even \emph{preferable} to perfect simulator alignment? Such a possibility appears incompatible with classical variance-reduction or bias-correction viewpoints, which rely on minimizing or compensating for simulator–reality discrepancies. The \emph{betting} perspective introduced in this paper suggests a different interpretation. Within a betting framework, predictive advantage, rather than perfect fidelity, is what matters. A simulator that is ``wrong but informative'', because it induces directional predictive signals or structured uncertainty, may provide a stronger edge than one that attempts to mimic reality exactly but carries no exploitable variance. This re-framing opens the door to a unified viewpoint in which tools traditionally associated with policy transfer (e.g., domain randomization) may also play a principled and theoretically grounded role towards (provably) accurate and efficient sim-to-real \emph{performance evaluation}.

\subsection{Main contributions}
To the best of the authors' knowledge, this paper makes the first attempt in formulating the sim-to-real policy evaluation problem as a \emph{betting} problem. 
\begin{itemize}[leftmargin=*, labelsep=0.5em]
    \item We first introduce a \emph{sequential betting formalism} (Algorithm~\ref{alg:abstract_betting}) for sim-to-real performance inference, in which simulator-informed bets induce a \emph{bet-weighted estimator} of the target performance measure~\eqref{eq:mu}. 
    \item We then show that, under this mechanism, the classical Kelly framework for optimal betting approximately \emph{coincides} with the statistically optimal strategy for efficient estimation (Theorems~\ref{thm:efficiency} and \ref{thm:kelly-equivalence}). That is, the strategy that maximizes wealth growth through betting is exactly the one that achieves the best sampling efficiency and estimation accuracy for the robot’s real-world performance under our proposed framework.
    \item We translate these theoretical insights into a practically executable sim-to-real testing algorithm inspired by \emph{Cover's universal portfolio principle} (Algorithm~\ref{alg:exp_betting}). By maintaining and adaptively reweighting a bank of randomized simulators using proper scoring rules, the proposed method systematically approximates the classical Kelly betting strategy.
    \item We further prove that wealth growth itself provides a diagnostic signal for assessing the quality of this approximation (Theorem~\ref{thm:wealth-evalue}), thereby endowing the practical algorithm with theoretical performance guarantees in real settings.
    \item Finally, we demonstrate the effectiveness and robustness of the proposed framework through a series of examples, ranging from controlled synthetic experiments to real robotic applications in legged locomotion and manipulation across both simulated and physical environments.
\end{itemize}
The proposed framework fundamentally leverages the scalability of modern simulation in robotics and offers a new theoretical perspective on understanding the role of domain randomization and simulation variance in real-world performance evaluation.

\noindent\textbf{Notations}. The set of real numbers are denoted by $\R$. $\mathbb{Z}$ denotes the set of positive integers and $\mathbb{Z}_k = \{1,\ldots,k\}$ for some $k\in \mathbb{Z}$. $|S|$ denotes the cardinality of a set $S$. $a \wedge b = \min(a,b)$ for some $a,b \in \R$.

\section{Main Method}\label{sec:method}
\begin{figure}[h]
    \centering
    \begin{adjustbox}{width=0.49\textwidth}
    \begin{tikzpicture}[>=stealth, every node/.style={align=center}]

\node (L1) at (4.5,3.5) {Section~\ref{subsec:method-algorithm} \\ Algorithm~\ref{alg:exp_betting}};
\node (R1) at (7,2) {Section~\ref{subsec:method-theorem} \\ Theorem~\ref{thm:efficiency}};

\node (L2) at (0,5.5) {Section~\ref{subsec:method-kelly} \\ Theorem~\ref{thm:kelly-equivalence}};
\node (R2) at (7,5.5) {Section~\ref{subsec:method-abstract} \\ Algorithm~\ref{alg:abstract_betting}};

\node (L3) at (0,2) {Section~\ref{subsec:method-assess} \\ Theorem~\ref{thm:wealth-evalue}};


\draw[->, line width=1.2pt] (L2) -- (L1)
  node[midway, sloped]
  {\footnotesize How to bet,\\ \footnotesize approximately?};

\draw[->, line width=1.2pt] (R1) -- (R2)
  node[midway, sloped]
  {\footnotesize When does betting \\ \footnotesize help, theoretically?};


\draw[->, line width=1.2pt] (L2) -- (R2)
  node[midway, above, sloped]
  {\footnotesize How to bet, theoretically?};

\draw[dashed,<->, line width=1.2pt] (R2) -- (L1)
  node[midway, above=-0.05cm, sloped]
  {};

\draw[->, line width=1.2pt] (L3) -- (L1)
  node[midway, sloped]
  {\footnotesize When does betting \\ \footnotesize help, approximately?};





\end{tikzpicture}
\end{adjustbox}
    \caption{\footnotesize{A roadmap of Section~\ref{sec:method}. Theoretical results (Theorems \ref{thm:efficiency}-\ref{thm:wealth-evalue}) establish when and how betting improves estimation; algorithms translate these insights into practice. Arrows show logical dependencies.}}
    \label{fig:overview}
    \vspace{-3mm}
\end{figure}
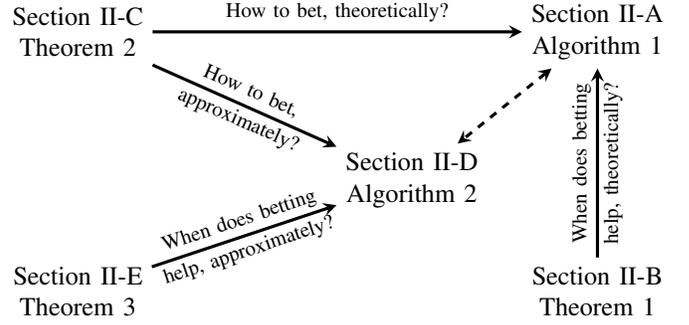

Proofs of all Theorems in this section can be found in the supplementary material.

\subsection{An abstract sequential betting algorithm}\label{subsec:method-abstract}

\begin{algorithm}[hb]
    \begin{algorithmic}[1]
\State {\bf Given:} $P$, $Q_\theta$, $\psi: \mathcal{X} \to \mathcal{M}$, $T \in \mathbb{Z}$
\State {\bf Initialize:} $\mathcal{D}\!=\!\emptyset$, $\tau_0 \in \mathcal{M}$, Wealth $W=1$, $S_0 = \emptyset$
\State {\bf For} $t = 1, 2, \ldots, T$:
\State \ \ \ \ Sample $\tilde{S}_t \sim Q_\theta$, $S_t \leftarrow S_{t-1} \cup \tilde{S}_t$ 
\State \ \ \ \ Based on $S_t$ and $\mathcal{D}$:
\State \ \ \ \ \ \ compute bet $b_t^{\uparrow} \in [0,1]$ on $\psi(x_{t})>\tau_{t-1}$
\State \ \ \ \ \ \ compute bet $b_t^{\downarrow} \in [0,1]$ on $\psi(x_{t})<\tau_{t-1}$
\State \ \ \ \ $b_t = \max\{b_t^{\uparrow}, b_t^{\downarrow}\}$
\State \ \ \ \ {\bf If} $b_t > 0$:
\State \ \ \ \ \ \ \ \ Sample $x_t \sim P$ 
\State \ \ \ \ \ \ \ \ $\mathcal{D}.\texttt{append}((x_t, b_t))$
\State \ \ \ \ \ \ \ \ $Y_t =
    \begin{cases}
        \ \ |\psi(x_{t})-\tau_{t-1}|, & \text{if the bet is correct},\\
        -|\psi(x_{t})-\tau_{t-1}|, & \text{otherwise}.
    \end{cases}$
\State \ \ \ \ \ \ \ \ $W \leftarrow W(1 +b_tY_t)$
\State \ \ \ \ \ \ \ \ $\tau_t = \sum_{(x,b) \in \mathcal{D}} b \cdot \psi(x) / \sum_{(x,b) \in \mathcal{D}} b $
\State {\bf Output:} $\hat{\mu}_{BW} = \tau_t$
\end{algorithmic}
    \caption{Abstract Betting}
    \label{alg:abstract_betting}
\end{algorithm}

An abstract version of the proposed sequential betting algorithm for sim-to-real performance inference is given in Algorithm~\ref{alg:abstract_betting}. It places a standard \emph{betting} mechanism at its core. At each round $t$, before observing the real sample but after drawing auxiliary information from the accumulated simulator samples, a bet $b_t$ is chosen as a fraction of the (future) payoff, reflecting the algorithm's belief about the correctness of a forthcoming prediction (lines 5–8). If the prediction is subsequently verified by the real outcome, the gambler’s \emph{wealth} increases by the corresponding payoff $Y_t$; if not, the wealth decreases proportionally (lines 12–13). If $b_t = 0$ for all $t$, the bet-weighted estimator is undefined; setting $b_t = 1$ for all $t$ recovers the Monte Carlo estimator. It is important to point out that this \emph{betting} setup does reflect common practices in robotic sim-to-real workflows, in some conceptual respects. Practitioners routinely decide whether real-world tests are worthwhile based on simulation confidence~\cite{kadian2020sim2real, muratore2021data,rai2019using}. As real-world samples accumulate, it also jointly affects the utilization of simulator and the resultant bet. 

On the other hand, the algorithm also departs from standard betting practice and robot performance evaluation in two important ways. The first atypical component is the \emph{double betting} mechanism (lines~5-8) (conceptually close to~\cite{howard2020time}), which determines a ``directional'' bet on the correlation between the upcoming real-world sample and the running bet-weighted mean of the previously observed real-world measurements. This mechanism is intentionally introduced to align the betting procedure with the performance–estimation objectives of this paper. Moreover, for the same reason, the algorithm outputs a \emph{bet-weighted estimate} (line~15) in place of the more traditional equally weighted estimator such as the Monte Carlo mean in~\eqref{eq:mc}. 

Finally, Algorithm~\ref{alg:abstract_betting} is not immediately executable as stated, since critical implementation details remain unspecified. The determination of bet sizes $b_t$ is described only conceptually as being ``based on $S_t$ and $\mathcal{D}$'' without concrete formulas. Despite these gaps (which shall all be detailed in Section~\ref{subsec:method-algorithm} later), we first seek to address a more fundamental question: under what conditions does the sequential betting framework, as abstractly described in Algorithm~\ref{alg:abstract_betting}, outperform the Monte Carlo estimator? Our approach is to impose assumptions directly on the \emph{properties} of the betting sequence rather than prescribing specific mechanisms for generating these bets. \textbf{This separates the question of \emph{(i) when betting helps} (a statistical question about bet properties) from \emph{(ii) how to generate good bets} (an algorithmic design question)}. 


\subsection{When does betting help, theoretically?}\label{subsec:method-theorem}
Regardless of how the sequence of bets is generated, the following theorem establishes a universal performance guarantee concerning bet-weighted performance estimate. It precisely characterizes when the bet-weighted estimator (line~15 of Algorithm~\ref{alg:abstract_betting}) achieves higher efficiency, and therefore greater accuracy, than the standard Monte Carlo estimator~\eqref{eq:mc}.

\begin{theorem}[Bet-weighted inference V.S. Monte Carlo efficiency]
\label{thm:efficiency}
Consider the abstract sequential betting algorithm (Algorithm~\ref{alg:abstract_betting}) with the bet-weighted estimator $\hat{\mu}_{\text{BW}} = \sum_{t=1}^T w_t \psi(x_t)$ where $w_t = b_t/\sum_j b_j$. Correspondingly, the Monte Carlo estimator $\hat{\mu}_{\text{MC}}$ is defined as in~\eqref{eq:mc} and the target mean $\mu$ in \eqref{eq:mu}. For a fixed sample budget $n$ from $P$, the bet-weighted estimator achieves lower expected squared error $\mathbb{E}[(\hat{\mu}_{\text{BW}} - \mu)^2] < \mathbb{E}[(\hat{\mu}_{\text{MC}} - \mu)^2]$ if and only if:
\begin{equation}\label{eq:var_red}
\underbrace{\frac{\sigma^2}{n} - \sum_{t=1}^n w_t^2 \sigma_t^2}_{\text{Variance Reduction}} > \underbrace{\left[\sum_{t=1}^n w_t \beta(w_t)\right]^2}_{\text{Bias Penalty}},
\end{equation}
where $\sigma_t^2 = \text{Var}(\psi(x_t) | w_t)$ is the conditional variance given the weight, $\beta(w) = \mathbb{E}[\psi(x_t) | w_t = w] - \mu$ is the weight–outcome dependence term, and $\sigma^2 = \text{Var}_{x \sim P}[\psi(x)]$ is the variance of the measure function from the distribution $P$.
\end{theorem}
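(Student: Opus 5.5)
The plan is to decompose both mean squared errors into bias and variance and compare them directly. For the Monte Carlo side, since $x_1,\ldots,x_n$ are i.i.d.\ from $P$ and $\psi$ is bounded, $\hat{\mu}_{\text{MC}}$ is unbiased with $\mathbb{E}[(\hat{\mu}_{\text{MC}}-\mu)^2]=\sigma^2/n$. No further work is needed there. All of the effort goes into writing $\mathbb{E}[(\hat{\mu}_{\text{BW}}-\mu)^2]$ in the form $\sum_t w_t^2\sigma_t^2 + [\sum_t w_t\beta(w_t)]^2$. Once that identity holds, the stated inequality~\eqref{eq:var_red} is just a rearrangement of $\mathrm{MSE}_{\text{BW}}<\sigma^2/n$, and it gives both directions of the ``if and only if'' at once.

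For the bet-weighted side, I would condition on the weight sequence $w=(w_1,\ldots,w_n)$. I will treat the weights as the normalized bets over the $n$ rounds at which a real sample is drawn. Since $\sum_t w_t=1$, we can write $\hat{\mu}_{\text{BW}}-\mu=\sum_t w_t(\psi(x_t)-\mu)$. Put $\epsilon_t=\psi(x_t)-\mathbb{E}[\psi(x_t)\mid w_t]$, so that $\psi(x_t)-\mu=\beta(w_t)+\epsilon_t$. The error then splits into a bias part $B=\sum_t w_t\beta(w_t)$ and a noise part $N=\sum_t w_t\epsilon_t$. Expanding the square gives $\mathbb{E}[N^2]+2\mathbb{E}[BN]+\mathbb{E}[B^2]$. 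The cross term vanishes because $\mathbb{E}[\epsilon_t\mid w_t]=0$. The diagonal of $N^2$ produces $\sum_t w_t^2\sigma_t^2$ by the definition of $\sigma_t^2$. The off-diagonal terms $w_sw_t\epsilon_s\epsilon_t$ should vanish by the sequential structure of Algorithm~\ref{alg:abstract_betting}. The bet $b_t$ is chosen before $x_t$ is drawn, using only $S_t$ and $\mathcal{D}$, that is, only the simulator samples and the past real samples. So for $s<t$, $\epsilon_s$ is measurable with respect to the information available when the later bet is placed, while $x_t$ is a fresh draw whose conditional deviation is centred. This martingale-difference argument kills the cross terms.

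The main obstacle is that the statement is written with the weights $w_t$ and $\beta(w_t)$ appearing outside the expectation, as if they were deterministic. Yet the weights are random, and since the normalization $\sum_j b_j$ involves future bets, $w_t$ is not even predictable. I would handle this in two steps. First, carry out the whole computation conditionally on the realized bet sequence, so that $w_t$ is fixed and $\sigma_t^2$ and $\beta$ are conditional quantities, exactly as in the definitions. Second, state the comparison at the level of that conditional MSE, or else interpret the displayed sums as their expectations. For the conditional argument I will need the assumption implicit in the theorem's definitions: given the weights, the $\psi(x_t)$ are conditionally independent with conditional means $\mu+\beta(w_t)$ and variances $\sigma_t^2$. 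I would state this assumption explicitly as the reading of ``$\mathrm{Var}(\psi(x_t)\mid w_t)$''. The reason is that dependence of $x_t$ on future normalized weights is exactly what could break the vanishing of the off-diagonal terms. With that in place, the conditional MSE equals $\sum_t w_t^2\sigma_t^2+[\sum_t w_t\beta(w_t)]^2$, and subtracting it from $\sigma^2/n$ gives~\eqref{eq:var_red} as an equivalence.

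As a sanity check at the end, I would verify the Monte Carlo case. Taking $b_t\equiv 1$ gives $w_t=1/n$, $\beta\equiv 0$ and $\sigma_t^2=\sigma^2$, so both sides of~\eqref{eq:var_red} are zero and the strict inequality fails, as it should, since the two estimators then coincide.
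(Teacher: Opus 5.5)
Your proposal is correct and follows the paper's proof essentially step for step: decompose the MSE of $\hat{\mu}_{\text{BW}}$ into $\sum_t w_t^2\sigma_t^2 + [\sum_t w_t\beta(w_t)]^2$ by treating the weights as fixed and invoking conditional independence of the samples given the weights (the same assumption the paper uses), note that Monte Carlo has MSE $\sigma^2/n$, and rearrange to obtain both directions of the equivalence. Your explicit conditioning on the realized weights, and your observation that the martingale-difference argument alone cannot remove the cross terms (because $w_t$ depends on future bets), simply make precise what the paper's computation does implicitly.
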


To this end, the central question becomes how to design sequential double–betting procedures that realize the variance–reduction guarantees established in Theorem~\ref{thm:efficiency}. Concretely, this requires instantiating the bet-generation components left unspecified in Algorithm~\ref{alg:abstract_betting}.

\subsection{How to make good bets, theoretically?}\label{subsec:method-kelly}
If the discussion were purely about betting rather than robotics, the answer would be straightforward: use the classical Kelly principle~\cite{kelly1956new,thorp1975portfolio}. Kelly betting, originally developed in information theory~\cite{kelly1956new} and later adopted in finance and gambling~\cite{thorp1975portfolio,thorp2011understanding}, addresses a simple yet fundamental question: \emph{``When repeatedly placing bets that are slightly favorable, what fraction of my wealth should I wager each time so that my wealth grows as quickly as possible while controlling risk?''}

The central insight is that one should wager \emph{more} when the perceived advantage is high and \emph{less} when the outcome uncertainty is large. In its most compact form, the Kelly rule prescribes $\text{bet size} \;\propto\; \text{advantage}/\text{uncertainty (variance)}$, which is precisely an inverse-variance scaling~\cite{thorp2011understanding,rotando1992kelly}. 

A slightly more formal and sequential view of the Kelly principle comes from a small-stakes (Taylor-expanded) and small-edge approximation applied conditionally on past information~\cite{kelly1956new,cover1999elements,maclean1992growth}. Consider a betting process similar to Algorithm~\ref{alg:abstract_betting} in which, at round $t$, a fraction $b_t\in[0,1]$ is wagered and a random payoff $Y_t\in\R$ is received. Conditioned on the past history $\mathcal{F}_{t-1}$, the log-wealth increment admits the second-order Taylor expansion
\begin{equation}
    \log(1 + b_t Y_t)
    \;=\;
    b_t Y_t
    - \frac{1}{2} b_t^2 Y_t^2
    + O(|b_t Y_t|^3),
\end{equation}
which is accurate when the small-stakes condition $|b_tY_t|\ll1$ holds. Taking conditional expectations and retaining the leading terms yields the approximate conditional log-growth
\begin{equation}\label{eq:kelly-growth-conditional}
    b_t\,\mathbb{E}[Y_t \mid \mathcal{F}_{t-1}]
    \;-\;
    \frac{1}{2} b_t^2\,\mathbb{E}[Y_t^2 \mid \mathcal{F}_{t-1}].
\end{equation}
Differentiating and setting the derivative to zero gives
\begin{equation}\label{eq:kelly-small-stakes-inv-var}
    b_t^*
    \;\approx\;
    \frac{\mathbb{E}[Y_t \mid \mathcal{F}_{t-1}]}
         {\mathbb{E}[Y_t^2 \mid \mathcal{F}_{t-1}]}=\frac{\mu_t}{\sigma_t^2 + \mu_t^2}\approx\frac{\mu_t}{\sigma_t^2},
\end{equation}
where the final approximation follows under the small-edge condition $\mu_t^2 \ll \sigma_t^2$~\cite{maclean1992growth}. Note the small-stake and small-edge assumptions are classical in the information-theoretic and financial settings where the Kelly criterion was originally developed~\cite{kelly1956new,cover1999elements}. In the robotics performance-evaluation setting considered here, these conditions are even easier to satisfy, since the ``wagers'' are not real monetary stakes but mathematical constructs that can be scaled arbitrarily without affecting the underlying performance measure.

The remainder of this section will show that this same principle underlies the optimal variance-reduction strategy in our setting. In fact, we will see that the Kelly-optimal betting rule approximately coincides with the choice of weights that minimizes the estimation error of the bet-weighted mean, thereby connecting successful wealth growth to improved statistical efficiency of Algorithm~\ref{alg:abstract_betting}. 

\begin{theorem}[Kelly-style betting induces optimal bet-weighted inference]
\label{thm:kelly-equivalence}
Let $\hat{\mu}_{\mathrm{BW}} = \sum_{t=1}^T w_t \,\psi(x_t)$ be the bet-weighted estimator of Theorem~\ref{thm:efficiency}, where $w_t = b_t / \sum_{j=1}^T b_j$ and $\sigma_t^2 = \text{Var}(\psi(x_t)\mid \mathcal{F}_{t-1})$.
Suppose that at each round $t$ the bet size $b_t$ is chosen according to the Taylor-expanded Kelly rule~\eqref{eq:kelly-small-stakes-inv-var}, so that $b_t$ is proportional to $\mathbb{E}[Y_t\mid\mathcal{F}_{t-1}]/\text{Var}(Y_t\mid\mathcal{F}_{t-1})$, which we denote by $\kappa_{t-1}/\sigma_t^2$ with $\kappa_{t-1}:=\mathbb{E}[Y_t\mid\mathcal{F}_{t-1}]$.
Then the induced normalized weights satisfy $w_t \propto \kappa_{t-1}/\sigma_t^2$.
Moreover, if the predictive edge is (approximately) constant, i.e., $\kappa_{t-1}\approx \kappa$ for all $t$, then the induced weights reduce (approximately) to inverse-variance weighting, $w_t \propto 1/\sigma_t^2$, hence they (approximately) maximize the variance-reduction term $\sigma^2/T - \sum_{t=1}^T w_t^2 \sigma_t^2$ in~\eqref{eq:var_red}.

\end{theorem}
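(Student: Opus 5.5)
The argument has two parts: a normalization step that passes the Kelly proportionality from $b_t$ to $w_t$, and a constrained quadratic minimization showing that inverse-variance weights maximize the variance-reduction term of Theorem~\ref{thm:efficiency}.

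\textbf{Step 1 (normalization).} By hypothesis $b_t = c\,\kappa_{t-1}/\sigma_t^2$ for a common constant $c>0$; we restrict to rounds with $\kappa_{t-1}>0$, since otherwise $b_t=0$ and the round contributes nothing. Then
\[
w_t = \frac{b_t}{\sum_{j=1}^T b_j} = \frac{\kappa_{t-1}/\sigma_t^2}{\sum_{j=1}^T \kappa_{j-1}/\sigma_j^2}.
\]
The constant $c$ cancels in the ratio, so $w_t \propto \kappa_{t-1}/\sigma_t^2$.

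Two identifications need to be stated explicitly. The denominator of the Kelly rule~\eqref{eq:kelly-small-stakes-inv-var} is $\mathbb{E}[Y_t^2\mid\mathcal{F}_{t-1}]$. Under the small-edge regime this is replaced by $\mathrm{Var}(Y_t\mid\mathcal{F}_{t-1})$. Since $Y_t=\pm|\psi(x_t)-\tau_{t-1}|$ and $\tau_{t-1}$ is $\mathcal{F}_{t-1}$-measurable, this variance is read as $\sigma_t^2=\mathrm{Var}(\psi(x_t)\mid\mathcal{F}_{t-1})$. This matches the notation the statement adopts.

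\textbf{Step 2 (constant edge).} If $\kappa_{t-1}\approx\kappa$ for all $t$, the factor $\kappa$ also cancels in the normalization. This leaves $w_t \approx (1/\sigma_t^2)/\sum_j (1/\sigma_j^2)$, i.e.\ inverse-variance weighting.

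\textbf{Step 3 (optimality).} The term $\sigma^2/T$ does not depend on the weights. So maximizing $\sigma^2/T-\sum_t w_t^2\sigma_t^2$ is the same as minimizing $\sum_t w_t^2\sigma_t^2$ subject to $\sum_t w_t=1$. I would prove this by Cauchy--Schwarz:
\[
1=\Bigl(\sum_t w_t\sigma_t\cdot\sigma_t^{-1}\Bigr)^2\le \Bigl(\sum_t w_t^2\sigma_t^2\Bigr)\Bigl(\sum_t \sigma_t^{-2}\Bigr).
\]
Hence $\sum_t w_t^2\sigma_t^2\ge 1/\sum_t\sigma_t^{-2}$. Equality holds exactly when $w_t\sigma_t\propto\sigma_t^{-1}$, i.e.\ $w_t\propto 1/\sigma_t^2$. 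A Lagrange multiplier argument gives the same result.

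The inverse-variance weights are automatically nonnegative, so the minimizer over the simplex coincides with the unconstrained affine minimizer. The objective is strictly convex, so it is the unique maximizer of the variance-reduction term. When $\kappa_{t-1}$ is only approximately constant, continuity of the quadratic objective in $w$ makes the Kelly weights approximately optimal; this is what the word ``approximately'' in the statement covers.

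\textbf{Main obstacle.} The hard part is interpretive rather than computational. Theorem~\ref{thm:efficiency} defines $\sigma_t^2$ conditionally on $w_t$, while here it is conditional on $\mathcal{F}_{t-1}$. Moreover, the weights are random and depend on the very variances being optimized. I would handle this by treating $\{\sigma_t^2\}$ as given, i.e.\ conditioning on the realized bet sequence. Under that conditioning the optimization in Step 3 is deterministic. The claim is then pointwise optimality of the variance-reduction term only, not of the full mean squared error: the bias penalty in~\eqref{eq:var_red} is explicitly outside the scope of the statement.
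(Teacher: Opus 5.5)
Your proposal is correct and follows the paper's proof. Normalizing the Kelly-proportional bets cancels the constant and gives $w_t \propto \kappa_{t-1}/\sigma_t^2$. Under a constant edge these become inverse-variance weights, which uniquely minimize $\sum_t w_t^2\sigma_t^2$ subject to $\sum_t w_t = 1$. The only differences are additions: you justify that minimization by Cauchy--Schwarz, which the paper merely asserts, and you explicitly handle the $\kappa_{t-1}\le 0$ rounds and the conditioning on $\{\sigma_t^2\}$, which the paper leaves implicit.
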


Theorem~\ref{thm:kelly-equivalence} applies to a broad class of Kelly-style updates with $\kappa_{t-1}$ as an $\mathcal{F}_{t-1}$-measurable ``edge'' factor and $\sigma_t^2$ as the conditional outcome variance. The constant-edge approximation $\kappa_{t-1}\approx\kappa$ holds most accurately in early-to-mid rounds when $|\tau_t - \mu|$ remains large; as $\tau_t \rightarrow \mu$, the edge diminishes and Kelly betting naturally reduces stakes, transitioning the estimator toward stability rather than aggressive weighting. In the specific case arising in Algorithm~\ref{alg:abstract_betting}, the payoff is defined as $Y_t \;=\; B_t\,(\psi(x_t) - \tau_{t-1})$, where $B_t\in\{+1,-1\}$ encodes whether the algorithm bets ``up'' or ``down''. Under this \emph{regression}-style payoff, $\text{Var}(Y_t \mid \mathcal{F}_{t-1})
    = \text{Var}(\psi(x_t)\mid\mathcal{F}_{t-1})
    = \sigma_t^2$,
so the Kelly denominator coincides exactly with the variance term used in Theorem~\ref{thm:efficiency}. The Kelly ``edge'' then becomes
\begin{equation}
        \kappa_{t-1}
        = B_t\bigl(\mathbb{E}[\psi(x_t)\mid\mathcal{F}_{t-1}] - \tau_{t-1}\bigr)
        = B_t (\mu-\tau_{t-1}).   
\end{equation}
A practical formulation of Kelly betting is expressed as
\begin{equation}\label{eq:ideal_kelly}
    b_t = \lambda \frac{\kappa_{t-1}}{\sigma_t^2}.
\end{equation}
$\lambda \in (0,1]$ is known as the \emph{Kelly fraction}, controlling the level of risk exposure~\cite{ziemba2006good}. The choice $\lambda = 1$ corresponds to the \emph{full Kelly} strategy, which maximizes asymptotic log-wealth growth~\cite{thorp2011understanding}, while $\lambda = 0.5$ is often referred to as the \emph{half-Kelly} strategy, offering more conservative behavior with reduced variance and improved robustness in finite-sample settings.


\begin{remark}\label{rmk:kelly-best-not-real}
It may be tempting to interpret the ideal Kelly update~\eqref{eq:ideal_kelly} as implicitly selecting the ``true'' real-world simulator, since the optimal bet~\eqref{eq:ideal_kelly} uses the real distribution. This interpretation is \emph{incorrect}. Kelly betting requires only the \emph{payoff distribution} of $\psi(x_t)-\tau_{t-1}$, specifically its conditional mean and variance, and does not depend on any generative or dynamical model of the real system. Thus, even in the ideal case, the object that Kelly optimizes against is not a simulator of reality, but the one-dimensional payoff law. 
\end{remark}

From a practical standpoint, however, the ``ideal'' Kelly rule~\eqref{eq:ideal_kelly} is not directly implementable, since neither the true predictive advantage nor the conditional variances $\sigma_t^2$ are known a priori. More ironically, this creates a circular dependency: constructing an accurate estimate of $\mu$ would itself require prior knowledge of $\mu$. This limitation was already implicit in Kelly's original formulation~\cite{kelly1956new}, which assumes that the gambler knows the true odds or outcome probabilities, an assumption rarely satisfied in practice. In the sim-to-real performance evaluation problem studied in this paper, this is precisely where the simulator re-enters the picture. 

\subsection{How to make good bets, approximately?}
\label{subsec:method-algorithm}
In the attempt to \emph{approximate} the ``ideal'' Kelly conditions, our main proposal is, again, inspired by classical ideas in finance and gambling, most notably \emph{Cover’s universal portfolio principle}~\cite{cover1991universal,cover2002universal}. By maintaining and updating a bank of experts, each representing a distinct simulator or predictive hypothesis, the algorithm adaptively reweights these experts according to their observed betting performance. This universal-portfolio–style aggregation admits a classical guarantee~\cite{algoet1988asymptotic,littlestone1994weighted,cesa2006prediction}: its cumulative performance is within logarithmic regret of the best expert in the bank~\cite{littlestone1994weighted}, despite that expert being unknown a priori (formal adaptation to our setting is left to future work).

Specifically, consider a bank of simulator instances $\{Q_{\theta^{(k)}}\}_{k=1}^K$ with $K$  ``expert'' betting strategies. The $k$-th expert can be instantiated by any lightweight simulator configuration or update rule, for example, a domain-randomized parameter setting, an adversarially stressed model instance, or any other plausible (or even not necessarily plausible) hypothesis about the real-world dynamics. Each expert proposes a predictive distribution for the performance measure, summarized by $\mu^{(k)}\!:=\!\mathbb{E}_{Q_{\theta^{(k)}}}[\!\psi(x)\!]$ and $(\sigma^{(k)})^2\!:=\!\text{Var}_{Q_{\theta^{(k)}}}\!(\!\psi(x)\!)$. The universal portfolio idea then maintains a \emph{score-weighted mixture} over all experts, allowing the algorithm to asymptotically compete with the best simulator in hindsight without knowing in advance which simulator is informative. Consistent with both Cover’s construction and classical gambling theory, each simulator is assigned a nonnegative weight $\pi_t^{(k)}$ with $\sum_k \pi_t^{(k)}=1$. A weighted simulator mixture is then used as an approximation of the ideal Kelly as
\begin{equation}\label{eq:approx-kelly}
    m_t\!=\!\sum_{k=1}^K\!\pi_t^{(k)}\! \mu^{(k)}\!,\!v_t\!=\!\sum_{k=1}^K\!\pi_t^{(k)}\!(\!\sigma^{(k)}\!)^2,b_t\!=\!
    \frac{\lambda\!|m_t\!-\!\tau_{t-1}|}{v_t}
    \!\wedge\!1,
\end{equation}
with betting direction $B_t=\mathrm{sign}(m_t-\tau_{t-1})$. Here $\lambda\in(0,1]$ is the Kelly fraction. This produces a \emph{single combined bet} at each round, exactly as in Algorithm~\ref{alg:abstract_betting}.

After observing the real-world outcome $y_t = \psi(x_t)$, for each simulator instance $k$, we maintain a cumulative log-score $s_t^{(k)}$ updated as
\begin{equation}\label{eq:score_update}
    s_t^{(k)}
    =
    s_{t-1}^{(k)}
    -
    \eta\,\ell\!\left(y_t;\mu^{(k)},(\sigma^{(k)})^2\right),
\end{equation}
where $\eta>0$ is a positive learning-rate (inverse-temperature) parameter that controls how aggressively evidence from real-world outcomes is accumulated, analogous to the learning-rate or risk-sensitivity parameters in classical exponential-weights algorithms and betting-based sequential decision frameworks~\cite{cover1999elements, cesa2006prediction,freund1997decision}. $\ell(\cdot)$ is the Gaussian log-score $\ell(y;\mu,\sigma^2)
=
\frac{1}{2}
\left[
    \log(2\pi\sigma^2)
    +
    \frac{(y-\mu)^2}{\sigma^2}
\right]$,
i.e., the negative log-likelihood of $y$ under the normal model $\mathcal{N}(\mu,\sigma^2)$. Note the Gaussian log-score is standard in universal portfolios~\cite{cover1991universal}; alternative scores (Beta, Brier~\cite{gneiting2007strictly}) are left to future work.
The normalized trust weights are then obtained via softmax normalization,
\begin{equation}\label{eq:softmax_pi}
    \pi_t^{(k)}
    =
    \frac{\exp(s_t^{(k)})}{\sum_{j=1}^K \exp(s_t^{(j)})}.
\end{equation}

Importantly, this update does not seek to identify a simulator that matches the real world; rather, it maintains a distribution over simulators that are most informative for predicting real-world outcomes, even when all simulators are misspecified.

Putting the above components together yields Algorithm~\ref{alg:exp_betting}, an explicitly executable instantiation of Algorithm~\ref{alg:abstract_betting}. At each round, simulator experts are updated via a universal score-weighted mixture, their predictive moments are aggregated into $(m_t,v_t)$, a Kelly-style bet $b_t$ is computed using~\eqref{eq:approx-kelly}, and the real-world payoff is incorporated into the bet-weighted estimator $\tau_t$. No other changes are made to Algorithm~\ref{alg:abstract_betting} concerning the abstract betting loop, payoff, or estimator; only the approximation of the Kelly quantities is specified.


Finally, although Algorithm~\ref{alg:exp_betting} assumes a fixed simulator bank, an important direction for future work is to adaptively revise simulators when performance degrades, for instance when sustained declines in wealth are observed. As in finance, such behavior may indicate model misspecification, but in sim-to-real settings the appropriate update mechanism is inherently application-dependent. This leads to the final challenge addressed in this section: how do we know, in practice, when the approximate bet is actually working?

\begin{algorithm}
    \begin{algorithmic}[1]
\State {\bf Given:} $P$, $\{Q_\theta\}_{k=1}^K$ for some $K \in \mathbb{Z}$, $\psi: \mathcal{X} \to \mathcal{M}$, $T \in \mathbb{Z}, \eta>0$
\State {\bf Initialize:} $\mathcal{D} = \emptyset$, $\tau_0 \in \mathcal{M}$, $W=1$, $S_0=\emptyset$, $s_0^{(k)}=0$, $\pi_0^{(k)}, \forall k$ s.t. $\sum_{k=1}^K \pi_0^{(k)}=1$
\State {\bf For} $t = 1, 2, \ldots, T$:
\State \ \ \ \ Sample $\tilde{S}_t \sim Q_\theta$, $S_t \leftarrow S_{t-1} \cup \tilde{S}_t$ 
\State \ \ \ \ Based on $S_t$ and $\mathcal{D}$:
\State \ \ \ \ \ \ \emph{compute $m_t$, $v_t$, $b_t$ by \eqref{eq:approx-kelly}}
\State \ \ \ \ \ \ \emph{compute bet direction $\text{sign}(m_t - \tau_{t-1})$}
\State \ \ \ \ {\bf If} $b_t > 0$:
\State \ \ \ \ \ \ \ \ line 10-14 in Algorithm~\ref{alg:abstract_betting}
\State \ \ \ \ \ \ \ \ \emph{Update log-score $s_t^{(k)}$ by \eqref{eq:score_update} for all $k\in\mathbb{Z}_K$}
\State \ \ \ \ \ \ \ \ \emph{Update weights $\pi_t^{(k)}$ by \eqref{eq:softmax_pi} for all $k\!\in\!\mathbb{Z}_K$}
\State {\bf Output:} $\hat{\mu}_{BW} = \tau_t$
\end{algorithmic}
    \caption{Approximated Betting}
    \label{alg:exp_betting}
\end{algorithm}

\subsection{When does betting help, approximately?}\label{subsec:method-assess}
One possible approach is to invoke Theorem~\ref{thm:efficiency}, provided that the required variance and bias terms can be estimated with provable guarantees. While this may be feasible in principle, we were unable to obtain such estimates in this paper and will consider it as future work. The answer we provide is remarkably simple yet deeply characteristic of the betting perspective: \emph{wealth}. 

\begin{theorem}[Growing wealth implies effective estimate, statistically]
\label{thm:wealth-evalue}
Let $(\mathcal{F}_t)_{t=0}^T$ be a filtration and let $(Y_t)_{t=1}^T$ be the realized payoff of the betting strategy at round $t$ (as defined in Algorithm~\ref{alg:abstract_betting}, line 12), with $Y_t \geq -1$ almost surely. At each round $t$, a predictable betting strategy (measurable w.r.t.\ $\mathcal{F}_{t-1}$) chooses a stake $b_t \in [0,1]$, then observes $Y_t$ and updates the wealth via $W_0 = 1, W_{t+1} = W_t \bigl(1 + b_t Y_t\bigr), t=0,\dots,T-1$. Suppose the following \emph{no-advantage (no-edge) null hypothesis} holds:
\begin{equation}\label{eq:no-edge-null}
    H_0:\quad
    \mathbb{E}[Y_t \mid \mathcal{F}_{t-1}] \le 0
    \quad\text{for all }t=1,\dots,T,
\end{equation}
i.e., conditioned on the past, the betting strategy has no systematic positive expected gain.
Then for every $\alpha\in(0,1)$,
\begin{equation}\label{eq:e-advantage}
    \mathbb{P}_{H_0}\bigl(W_T \ge 1/\alpha\bigr) \;\le\; \alpha.
\end{equation}
\end{theorem}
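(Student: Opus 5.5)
The plan is to show that, under $H_0$, the wealth process $(W_t)_{t=0}^T$ is a nonnegative supermartingale with $W_0 = 1$. The bound~\eqref{eq:e-advantage} then follows from Markov's inequality, or from Ville's inequality if a time-uniform version is wanted.

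The first step is nonnegativity. Since $b_t \in [0,1]$ and $Y_t \ge -1$ almost surely, we have $1 + b_t Y_t \ge 1 - b_t \ge 0$. By induction every $W_t$ is nonnegative. Each $W_t$ is also integrable: in our setting $Y_t$ is bounded because $\psi \in [0,1]$ and $\tau_{t-1} \in [0,1]$, so $|Y_t| \le 1$. In the abstract statement, nonnegativity alone suffices to make the conditional expectations below well defined.

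The second step is the supermartingale property, which is the only real content. There is a small indexing mismatch in the statement: it writes $W_{t+1} = W_t(1 + b_t Y_t)$ with $b_t, Y_t$ indexed by round $t = 1, \dots, T$. I will read this as $W_t = W_{t-1}(1 + b_t Y_t)$, with $W_{t-1}$ and $b_t$ both $\mathcal{F}_{t-1}$-measurable. Then
\[
\mathbb{E}[W_t \mid \mathcal{F}_{t-1}] = W_{t-1}\bigl(1 + b_t\,\mathbb{E}[Y_t \mid \mathcal{F}_{t-1}]\bigr) \le W_{t-1},
\]
because $W_{t-1} \ge 0$, $b_t \ge 0$, and $\mathbb{E}[Y_t \mid \mathcal{F}_{t-1}] \le 0$ under $H_0$. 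The step I expect to be the main obstacle is justifying the "pull out what is known" move, i.e.\ that $W_{t-1} b_t$ may be taken outside the conditional expectation. This needs predictability of $b_t$ together with integrability of $W_{t-1} b_t Y_t$. With bounded $Y_t$ both are immediate. In general I would use the version of conditional expectation for nonnegative variables, or truncate $W_{t-1}$ and apply monotone convergence.

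The last step iterates the tower property to get $\mathbb{E}[W_T] \le W_0 = 1$. Markov's inequality then gives
\[
\mathbb{P}_{H_0}(W_T \ge 1/\alpha) \le \alpha\,\mathbb{E}[W_T] \le \alpha.
\]
I would also remark that Ville's inequality strengthens this to $\mathbb{P}_{H_0}(\max_{t \le T} W_t \ge 1/\alpha) \le \alpha$. Finally, rounds with $b_t = 0$ in Algorithm~\ref{alg:abstract_betting}, where no real sample is drawn, simply leave the wealth unchanged and fit the same argument.
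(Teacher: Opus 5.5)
Your proposal is correct and takes essentially the same route as the paper: both show that $(W_t)$ is a nonnegative supermartingale under $H_0$ by pulling the predictable factor $W_{t-1}b_t$ out of the conditional expectation. The differences are cosmetic: the paper invokes Ville's inequality and then specializes to $t=T$, while you apply Markov's inequality to $\mathbb{E}[W_T]\le 1$, which suffices for the fixed-horizon claim; you also make explicit the nonnegativity check $1+b_tY_t\ge 1-b_t\ge 0$ and repair the off-by-one indexing in the wealth recursion, both of which the paper leaves implicit.
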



The theorem itself relies only on the no-edge condition~\eqref{eq:no-edge-null} and makes no assumptions about how the bets are generated~\cite{vovk1995game,ramdas2023game}. When combined with Theorem~\ref{thm:efficiency} and the Kelly inverse-variance equivalence, this motivates interpreting $H_0$ as the ``bad'' regime in which the betting strategy has no predictive advantage and the bet-weighted estimator offers no improvement over the Monte Carlo estimator. 

\begin{remark}[Interpreting wealth growth]\label{rmk:intepret-wealth}
The guarantee~\eqref{eq:e-advantage} should not be interpreted as a probability of outperforming Monte Carlo (i.e., observing wealth $w$ does not imply improvement with probability $1-1/w$). Instead, sustained wealth growth provides increasing evidence that the betting strategy is extracting predictive signal from the data, indicating operation in a regime where bet-weighted estimation is expected to outperform plain Monte Carlo. A more refined probabilistic interpretation, for example via sequential factorization of wealth growth, is left to future work.
\end{remark}

\subsection{Discussions}
The proposed method addresses a different aspect of the performance-evaluation problem than variance reduction (e.g., IS) or bias-correction (e.g., PPI) approaches. IS relies on likelihood ratios reweighting the sampling process itself, while bias-correction methods, such as PPI, use real data to calibrate or debias estimates so that simulator outputs approximate the real distribution. Our betting framework instead treats simulators as sources of predictive advantage: simulator bias is neither estimated nor removed, but explicitly tolerated and traded against variance when informative edge is present. From this perspective, betting complements IS and bias-correction methods by operating in regimes where simulator fidelity is limited but simulation is abundant. In particular, unlike existing approaches that are bottlenecked by scarce real-world data, the proposed framework directly embraces arbitrarily large, even mismatched, simulator banks and provably benefits from their diversity. It could be further \emph{combined} with IS or PPI, though details are of future interest. The method involves only two interpretable hyperparameters: the learning rate $\eta \in \mathbb{R}_{\ge 0}$ and the Kelly fraction $\lambda \in (0,1]$. Empirical sensitivity to their choices is examined in Section~\ref{sec:case}. A more complete theoretical characterization is of future interest. 

\section{Case Studies}\label{sec:case}
The empirical studies illustrate the theoretical insights of Section~\ref{sec:method}, highlighting both effective and failure regimes. Accordingly, our comparisons focus primarily on the Monte Carlo and various proposed betting-based estimators, including the ideal Kelly strategy described in Section~\ref{subsec:method-kelly} and its practical approximations implemented in Algorithm~\ref{alg:exp_betting}. Additional comparisons with IS and PPI variants are provided in the supplementary material. As both require case-specific adaptations and are not applicable to all scenarios, our most careful fair implementations show consistent advantages for the proposed method. Programs in Python that allow direct replications of the presented results can be found in the GitHub repository~\footnote{\href{https://github.com/ISUSAIL/Bet4Sim2Real}{https://github.com/ISUSAIL/Bet4Sim2Real}}. 


\begin{figure*}
    \begin{subfigure}{.55\textwidth}
    \includegraphics[width=0.99\textwidth]{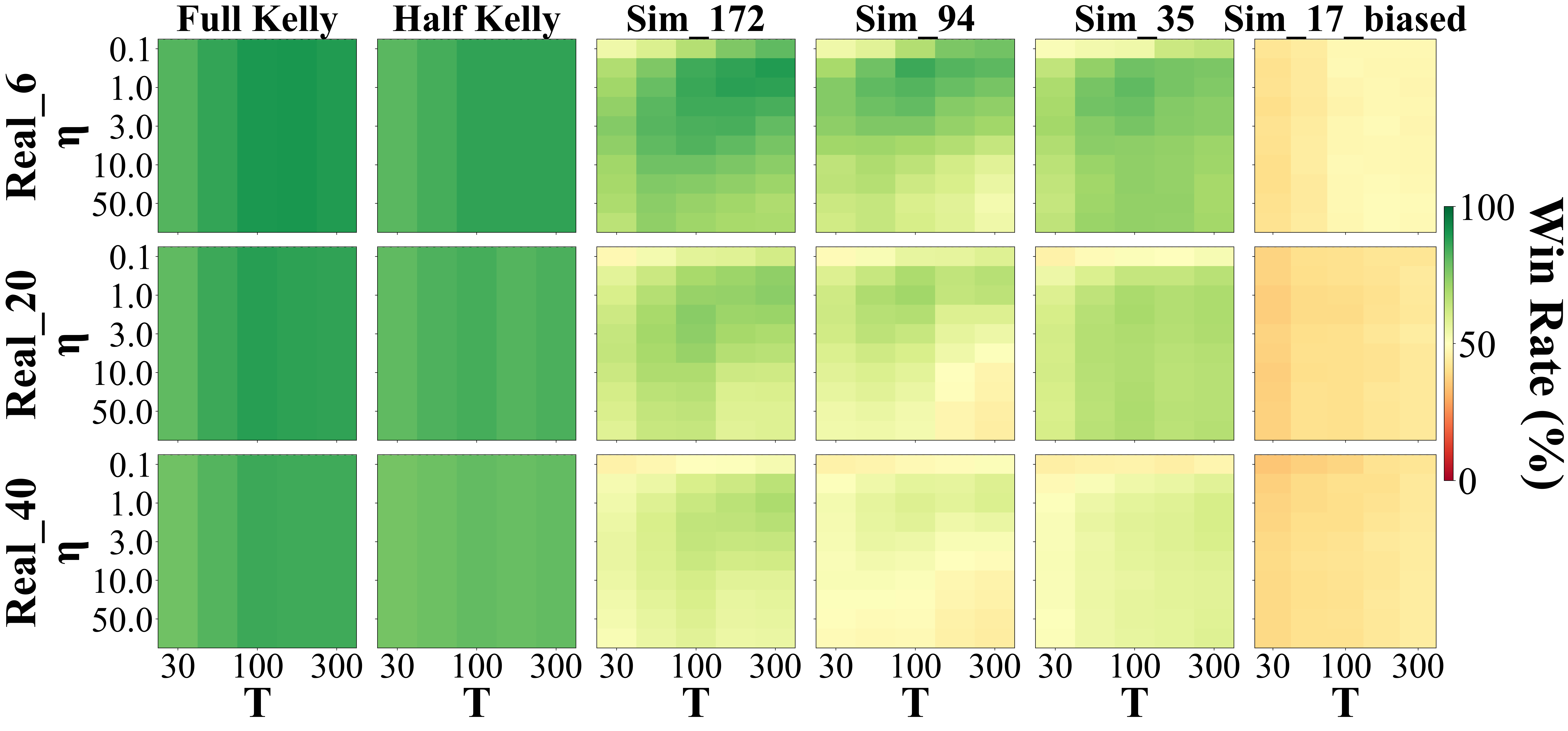}
    \caption{}
    \label{fig:synthetic:winrate}
    \end{subfigure}%
    \begin{subfigure}{.4\textwidth}
\includegraphics[width=.99\textwidth]{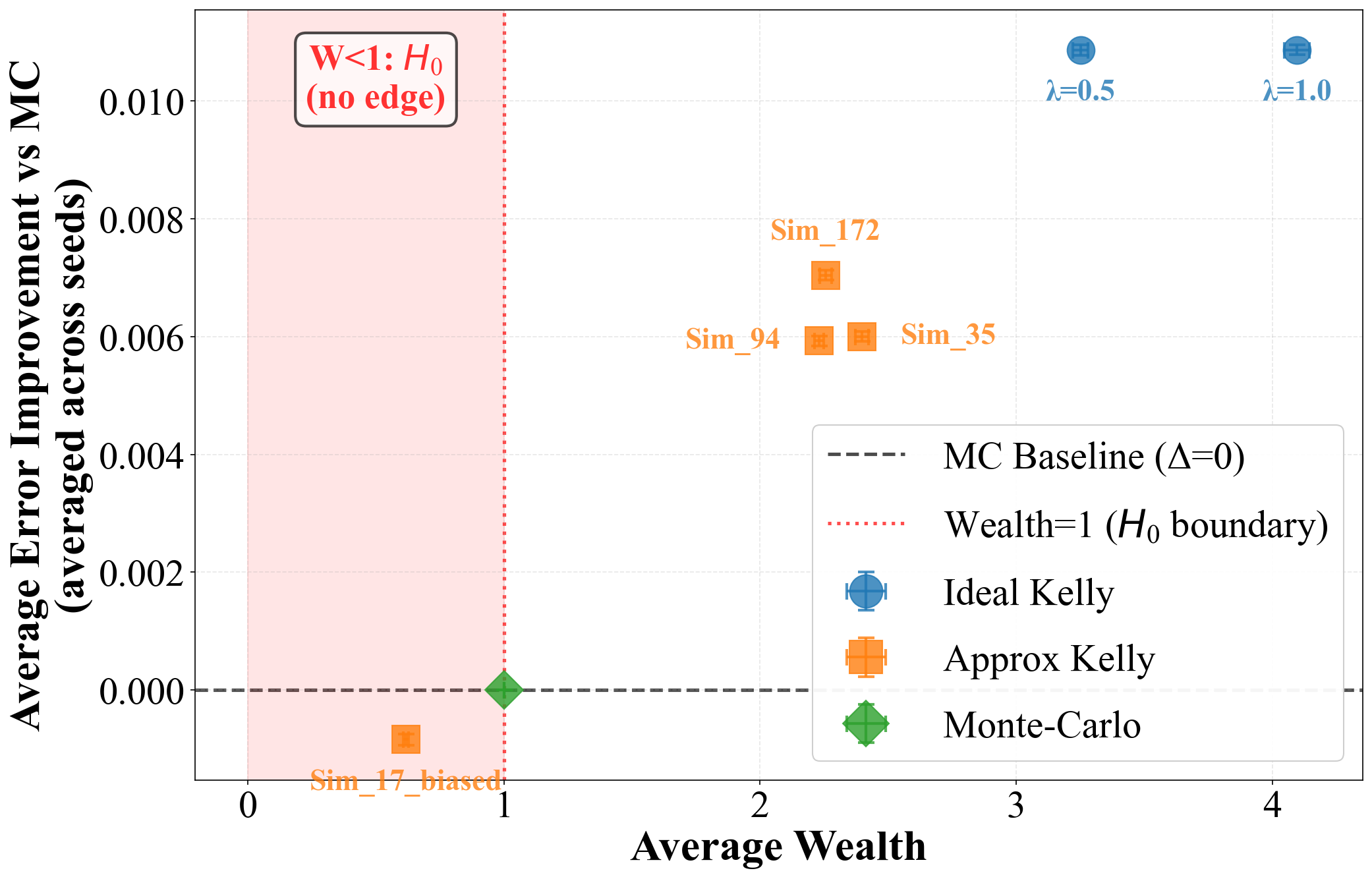}
    \caption{}
    \label{fig:synthetic:wealth}
    \end{subfigure}

    \begin{subfigure}{.45\textwidth}
    \includegraphics[width=.99\textwidth]{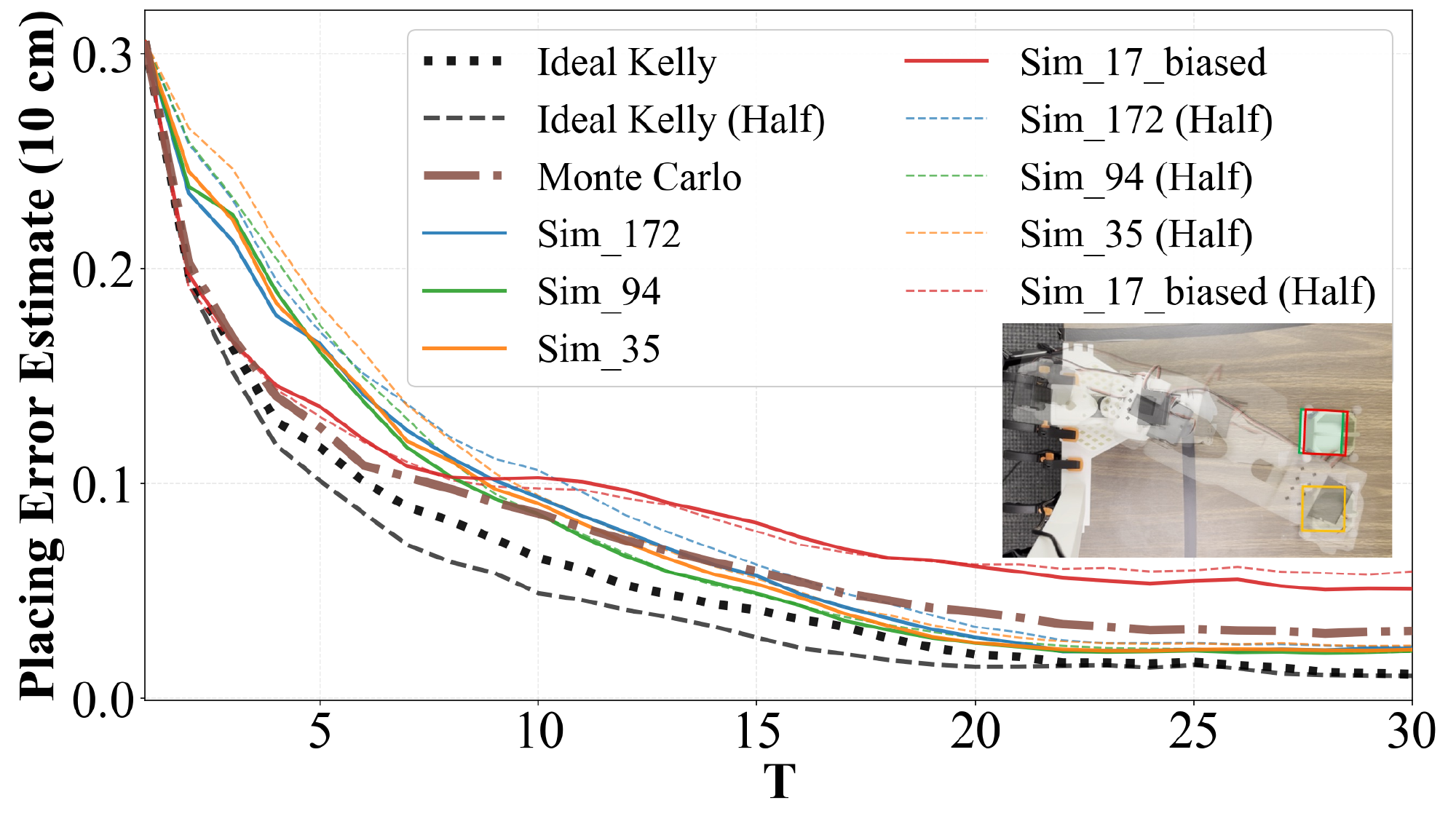}
    \caption{}
    \label{fig:pnp}
    \end{subfigure}%
    \hfill
    \begin{subfigure}{.45\textwidth}
    \includegraphics[width=.99\textwidth]{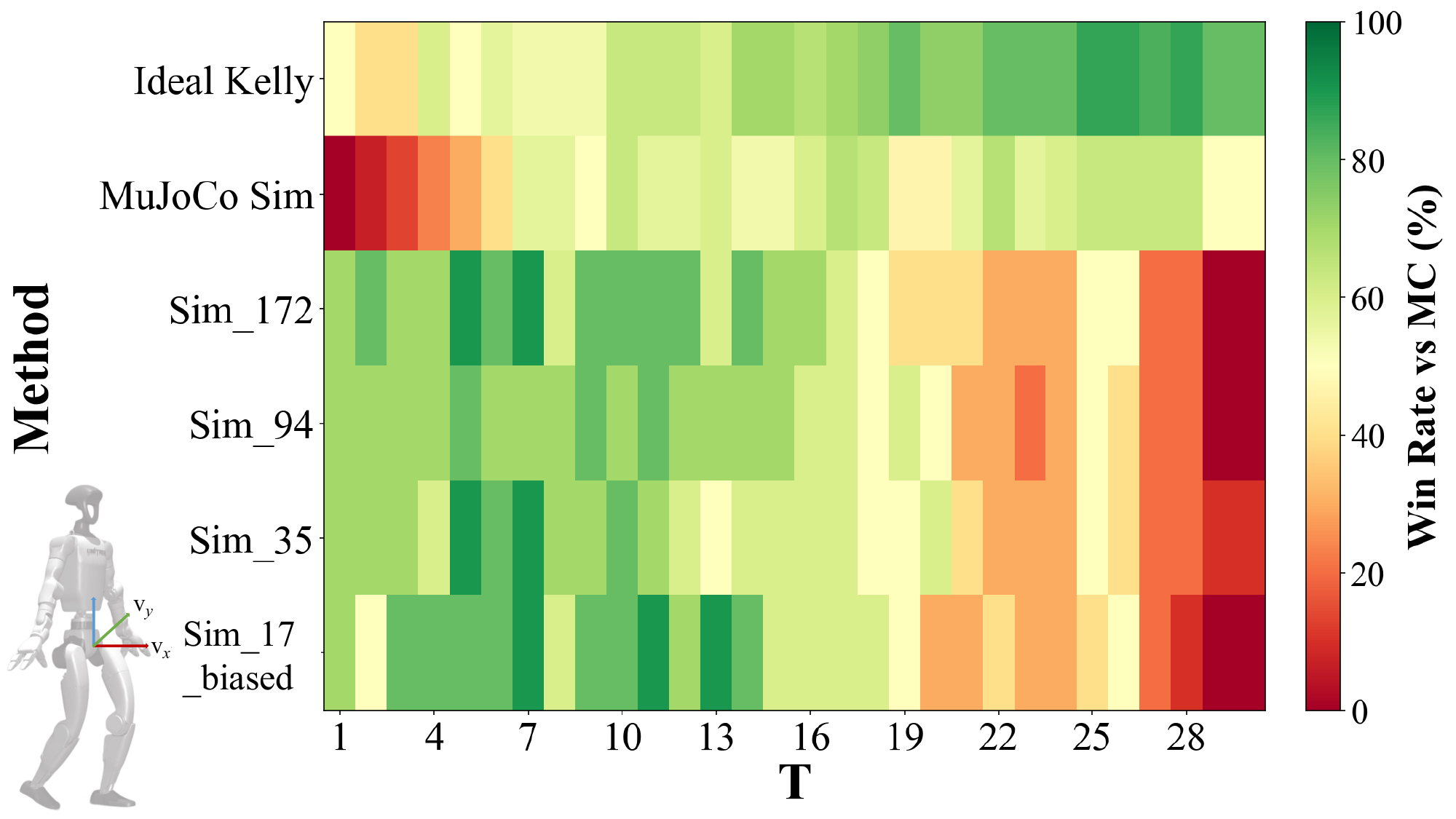}
    \caption{}
    \label{fig:loco}
    \end{subfigure}
    \vspace{-2mm}
    \caption{\footnotesize{Experiment results for Section~\ref{sec:case}: \ref{fig:synthetic:winrate}: win-rate comparison of all methods against the Monte Carlo baseline across different \texttt{Real} synthetic distributions, learning rates, and numbers of rounds; \ref{fig:synthetic:wealth}: average wealth across methods on the \texttt{Real\_6} distributions, providing empirical support for Theorem~\ref{thm:wealth-evalue}; \ref{fig:pnp}: Round-by-round estimates of the placement error for Section~\ref{sec:real_pnp} across different methods, the \emph{(Half)} label denotes half Kelly betting ($\lambda=0.5$); \ref{fig:loco}: Round-by-round win rate comparison for Section~\ref{sec:sim_loco} across different methods. The maximum number of rounds is set to $T=30$, reflecting commonly adopted trial counts in standardized real-world robotic testing, particularly for manipulators~\cite{roth1976performance,iso9283,iso18646}.}}
    \label{fig:exp}
    \vspace{-6mm}
\end{figure*}

\subsection{Synthetic examples}\label{sec:synthetic}

This section characterizes the real-world distribution $P$ and simulators $Q_{\theta}$ using a diverse family of parameterized synthetic distributions (e.g., Beta distributions, truncated normal distributions, Gaussian mixtures, Bernoulli, bimodal, and uniform-spike distributions (details can be found in the affiliated code and supplementary material) adapted from~\cite{Weng2025Rethink}) over the $[0,1]$ interval. The performance measure adopted in these cases is $\psi(x)=x$. 


\paragraph{Configurations}
For the ``real-world'' distributions, we consider three collections of synthetic distribution variants with cardinalities 6, 20, and 40, denoted as \texttt{Real\_6}, \texttt{Real\_20}, and \texttt{Real\_40}, respectively. Each collection is a strict subset of the next, constructed incrementally by progressively enriching the support. For the banks of simulator experts used in the approximated Kelly betting, we construct four collections of synthetic distribution variants with sizes 17, 35, 94, and 172, denoted as \texttt{Sim\_17\_biased}, \texttt{Sim\_35}, \texttt{Sim\_94}, and \texttt{Sim\_172}. \texttt{Sim\_17\_biased} is intentionally skewed and biased toward the left portion of the distributional support, whereas the other three banks provide increasingly dense coverage over the entire range with varying resolutions. Importantly, none of the distribution variants in the \texttt{Sim} banks overlap with those used in the \texttt{Real} collections. For the implementation of ideal Kelly variants, we adopt the ground-truth mean and ground truth variance (which upper bounds the conditional variance $\sigma_t$).

For each evaluated method (Monte Carlo, ideal Kelly with different Kelly fraction values, and approximated Kelly under different configurations), and for each task (corresponding to a specific distribution within one of the \texttt{Real} collections), the experiment is repeated over 100 independent random seeds. For each seed, the estimation error is evaluated against the ground-truth mean, as dictated by the known parameters of the underlying distribution. A bet-weighted estimate is considered a ``win'' if its estimation error is smaller than that of the Monte Carlo estimator on the same task, using the same random seed and the same number of rounds~$T$ (i.e., the same number of real-world samples). The overall results of the win rate evaluation is shown in Fig.~\ref{fig:synthetic:winrate} related to a variety of learning rates $\eta \in \{0.1,0.5,1.0,2.0,3.0,4.0,5.0,10.0,20.0,50.0,100.0\} $ for \eqref{eq:score_update} and different rounds $T \in \{30,50,100,200,300\}$. Each cell in the heatmap of Fig.~\ref{fig:synthetic:winrate} reports the win rate of a given method on a specific task, computed over 100 random seed trials. For the approximated Kelly variants, the win rate is evaluated for the corresponding choice of $\eta$ and number of rounds~$T$. Finally, for the evaluation against \texttt{Real\_6}, the average wealth achieved by each Kelly betting variant, aggregated over all 100 random seed trials and across different choices of $\eta$ and $T$, is reported in Fig.~\ref{fig:synthetic:wealth}. The red shaded region indicates the no-edge null hypothesis dictated by~\eqref{eq:no-edge-null} in Theorem~\ref{thm:wealth-evalue}.

\paragraph{Analysis and Discussions}
Overall, the empirical results support the theoretical claims in Section~\ref{sec:method} from multiple perspectives. As shown in Fig.~\ref{fig:synthetic:winrate}, ideal Kelly betting consistently dominates Monte Carlo by a clear margin across all settings, with performance improving as the number of rounds $T$ increases due to more stable estimation from additional real-world samples. This trend aligns with the asymptotic nature of the theoretical analysis. For the approximated Kelly method (Algorithm~\ref{alg:exp_betting}), the largest simulator bank, \texttt{Sim\_172}, generally achieves the best performance; however, more experts do not necessarily imply better results, as \texttt{Sim\_35} often outperforms \texttt{Sim\_94}. The learning rate~$\eta$ also influences performance, though these effects are descriptive for the studied cases, and a systematic characterization of their roles remains an important direction for future work. The wealth analysis in Fig.~\ref{fig:synthetic:wealth} empirically supports Theorem~\ref{thm:wealth-evalue}.

\subsection{From synthetic simulators to real-world pick-and-place}\label{sec:real_pnp}
This section evaluates the pick-and-place positioning accuracy of an imitation-learning-based policy on a SO-ARM101 manipulator~\cite{soarm100_github} (the policy was trained from tele-operated trajectories collected on the same physical robot, and therefore no naturally ``matching'' simulator exists for this task~\cite{Zhao-RSS-23}). 

\paragraph{Configuration}
In each trial, a green bucket is placed at a fixed location and moved to a target position offset by $102$~mm, with a maximum trial duration of 60~s. Positioning accuracy is measured as the Euclidean distance between the target center and the bucket center using the OptiTrack motion capture system with five Prime$^{\text{x}}$ 22 high-speed cameras ($\leq 0.15$~mm accuracy). To enforce an i.i.d.\ setting, each trial includes a full power cycle and a cooldown time $\ge 60~s$. The ground-truth error is estimated via Monte Carlo using 119 trials, achieving a relative half-width of $0.13$ at a $95\%$ confidence level. To align the performance measure~$\psi$ between the tracking task and the synthetic simulators, the placement error is expressed in units of $10$~cm and normalized (to $[0,1]$ interval) by a maximum distance of $30$~cm. For the Monte Carlo baseline, we evaluate 90 trials, each constructed from a contiguous window of 30 samples spanning the $i$-th to the $(i+30)$-th samples of the aforementioned ground-truth process. For each trial, we apply ideal Kelly betting with full and half Kelly fractions, as well as the approximated Kelly method using the same synthetic \texttt{Sim} variants as in the previous example. The estimation error, averaged over all trials as a function of $T$, is shown in Fig.~\ref{fig:pnp}.

\paragraph{Analysis and Discussion} 
Overall, ideal Kelly remains dominant, with half Kelly being more stable in the early stages, although both variants empirically converge to similar estimation error levels. Most \texttt{Sim} variants outperform Monte Carlo, particularly in later rounds as simulator weights adapt over time. In contrast, \texttt{Sim\_17\_biased} remains ineffective due to its pronounced mismatch with the task distribution.

\begin{remark}
Although synthetic distributions are effective in both examples, the pick-and-place study in particular highlights a key property of the bet-weighted estimator: its effectiveness depends on \emph{distributional} signal rather than task-specific or application-level alignment. This observation should not be interpreted as advocating the replacement of application-specific simulators with synthetic ones. In practice, synthetic banks often require broad distributional coverage and may be less efficient than well-correlated sim-to-real simulators, a point further illustrated in Section~\ref{sec:sim_loco}.
\end{remark}

\subsection{Sim-to-sim locomotion tracking performance evaluation}\label{sec:sim_loco}
In this section, we demonstrate the proposed method by evaluating the velocity-tracking performance of a Unitree G1 humanoid robot controlled by a reinforcement-learning-based locomotion policy~\cite{unitreegithub,schwarke2025rsl} in the MuJoCo simulator~\cite{todorov2012mujoco}. 

\paragraph{Configuration}: Each test trial lasts 4~s, starting from a fixed initial pose and executing two sequential velocity commands of 2~s each, specified by desired planar velocities $(v_x, v_y) \in [-1,1]^2$. The velocity-tracking error is defined as the mean $\ell_2$-norm of the difference between measured and commanded velocities over the 4~s horizon, normalized by a maximum speed of $1.5$~m/s. In this example, the ``real-world'' is a fixed MuJoCo environment with specified parameters (robot mass, damping coefficients, friction, and a fixed random seed for velocity commands). The simulator side consists of a domain-randomized bank of $10$ experts with varying parameters and random seeds, none matching the real instance. The experts' estimate update follows Algorithm~\ref{alg:exp_betting}. We also reuse the synthetic \texttt{Sim} variants from Section~\ref{sec:synthetic}. The ground-truth tracking error is estimated via sufficient sampling using 552 trials, achieving a relative half-width below $0.02$ at a $95\%$ confidence level. We then evaluate bootstrapped trials using different betting-based methods. Each method is repeated over 30 independent random seeds. As in Section~\ref{sec:synthetic}, a method is counted as a ``win'' if its error estimate outperforms Monte Carlo on the same samples. Results are shown in Fig.~\ref{fig:loco}, with learning rate $\eta=5.0$ throughout.

\paragraph{Analysis and Discussion}
Ideal Kelly betting remains the dominant method overall. Among the two classes of approximated Kelly variants, synthetic \texttt{Sim} distributions often perform better in the early rounds, largely due to noisy or ``lucky'' early guesses. In contrast, the bank of domain-randomized MuJoCo-based simulators may start with less favorable initial estimates, but converge rapidly and dominate in later rounds. Comparison on this specific case against a PPI variant~\cite{badithela2025reliable} can be found in the supplementary material.

\section{Conclusion: Limitations, and Future Work}
\label{sec:conclusion}

Beyond the theoretical limitations discussed in Remark~\ref{rmk:intepret-wealth}, the practical betting scheme in Section~\ref{subsec:method-abstract} currently lacks theoretical guarantees characterizing how well and to what extent it approximates the ideal Kelly strategy. While such guarantees appear attainable in principle (e.g., by leveraging classical regret bounds, mixture optimality results, or asymptotic efficiency analyses from Cover’s universal portfolio framework~\cite{cover2002universal}), formal development is left to future work. Another limitation (also common in robot testing) is the i.i.d. sampling assumption, which may not naturally hold in robotic experiments and can require additional effort to enforce (e.g., Section~\ref{sec:real_pnp}). Extending the framework to non-stationary settings with independent but non-identically distributed samples is an important direction for future work. On the empirical side, our demonstrations do not yet include large-scale, computationally intensive simulators (e.g., those with rich visual sensing or high-dimensional perception). While our theory applies broadly to scalar mean performance measures (and thus remains compatible with dimension-reduced measurements), extending the empirical evaluation to these settings is an important direction for future work.

\bibliographystyle{unsrtnat}
\bibliography{references}

\input{supplement}

\end{document}

%% file: supplement.tex
\clearpage
\onecolumn
\noindent{\LARGE Supplementary Material:``Betting for Sim-to-Real Performance Evaluation''}

\ 

This document supplements the paper titled ``Betting for Sim-to-Real Performance Evaluation'' by providing complementary proofs, extended experimental results, and additional discussions. Section, equation, figure, and algorithm numbering continues from the main paper without restarting. References are shared with and continue from the main paper. All experiments and figures can be reproduced through the released code base~\footnote{\href{https://github.com/ISUSAIL/Bet4Sim2Real}{https://github.com/ISUSAIL/Bet4Sim2Real}}. 

\section{Proof of Theorem~\ref{thm:efficiency}}\label{sup:var-proof}
\noindent\begin{proof}
The mean squared error of the bet-weighted estimator decomposes as
\begin{equation}
\mathbb{E}[(\hat{\mu}_{\text{BW}} - \mu)^2] = \text{Var}(\hat{\mu}_{\text{BW}}) + \text{Bias}^2(\hat{\mu}_{\text{BW}}).
\end{equation}

For the variance term, since the samples are conditionally independent given the weights
\begin{equation}
\text{Var}(\hat{\mu}_{\text{BW}}) = \text{Var}\left(\sum_{t=1}^n w_t \psi(X_t)\right) = \sum_{t=1}^n w_t^2 \text{Var}(\psi(X_t) | w_t) = \sum_{t=1}^n w_t^2 \sigma_t^2.
\end{equation}

For the bias term
\begin{equation}
\begin{aligned}
\text{Bias}(\hat{\mu}_{\text{BW}}) &= \mathbb{E}[\hat{\mu}_{\text{BW}}] - \mu
= \mathbb{E}\left[\sum_{t=1}^n w_t \psi(X_t)\right] - \mu
= \sum_{t=1}^n w_t \mathbb{E}[\psi(X_t) | w_t] - \mu
= \sum_{t=1}^n w_t (\mu + \beta(w_t)) - \mu\\
&= \sum_{t=1}^n w_t \beta(w_t).
\end{aligned}
\end{equation}

Therefore
\begin{equation}
\mathbb{E}[(\hat{\mu}_{\text{BW}} - \mu)^2] = \sum_{t=1}^n w_t^2 \sigma_t^2 + \left[\sum_{t=1}^n w_t \beta(w_t)\right]^2.
\end{equation}

For Monte Carlo with i.i.d. samples, we have $\mathbb{E}[(\hat{\mu}_{\text{MC}} - \mu)^2] = \sigma^2/n$. The bet-weighted estimator achieves lower MSE when:
\begin{equation}
\mathbb{E}[(\hat{\mu}_{\text{BW}} - \mu)^2] < \mathbb{E}[(\hat{\mu}_{\text{MC}} - \mu)^2]
\Leftrightarrow \sum_{t=1}^n w_t^2 \sigma_t^2 + \left[\sum_{t=1}^n w_t \beta(w_t)\right]^2 < \frac{\sigma^2}{n}
\Leftrightarrow \frac{\sigma^2}{n} - \sum_{t=1}^n w_t^2 \sigma_t^2 > \left[\sum_{t=1}^n w_t \beta(w_t)\right]^2.
\end{equation}

This gives the stated condition.
\end{proof}

\section{Proof of Theorem~\ref{thm:kelly-equivalence}}\label{sup:kelly-proof}
\noindent\begin{proof}
The variance-reduction term of Theorem~\ref{thm:efficiency},
\[
\frac{\sigma^2}{T}-\sum_{t=1}^T w_t^2\sigma_t^2,
\]
is maximized by minimizing $\sum_{t=1}^T w_t^2\sigma_t^2$ subject to $\sum_{t=1}^T w_t=1$ and $w_t\ge 0$. The unique minimizer is the inverse-variance weights $w_t\propto 1/\sigma_t^2$.

By the Taylor-expanded Kelly rule~\eqref{eq:kelly-small-stakes-inv-var}, there exists a constant $C>0$ such that
\begin{equation}
b_t = C\,\frac{\mathbb{E}[Y_t\mid\mathcal{F}_{t-1}]}{\mathrm{Var}(Y_t\mid\mathcal{F}_{t-1})}
= C\,\frac{\kappa_{t-1}}{\sigma_t^2},
\end{equation}
where $\kappa_{t-1}:=\mathbb{E}[Y_t\mid\mathcal{F}_{t-1}]$ is $\mathcal{F}_{t-1}$-measurable. Therefore the induced normalized weights are
\begin{equation}
w_t=\frac{b_t}{\sum_{j=1}^T b_j}
=
\frac{\frac{\kappa_{t-1}}{\sigma_t^2}}{\sum_{j=1}^T \frac{\kappa_{j-1}}{\sigma_j^2}}
\;\propto\;
\frac{\kappa_{t-1}}{\sigma_t^2},
\end{equation}
which proves the first claim.

In particular, if the predictive edge is (approximately) constant across rounds, $\kappa_{t-1}\approx \kappa$, then the Kelly-induced weights reduce (approximately) to inverse-variance weighting, $w_t\propto 1/\sigma_t^2$, proving the final claim.
\end{proof}

\section{Proof of Theorem~\ref{thm:wealth-evalue}}\label{sup:proof-wealth}
\noindent\begin{proof}
Conditioning on the past, we have
\begin{equation}
    \mathbb{E}[W_{t+1}\mid \mathcal{F}_{t-1}] 
    = \mathbb{E}\bigl[ W_t (1 + b_t Y_t) \mid \mathcal{F}_{t-1} \bigr] 
    = W_t \bigl(1 + b_t\,\mathbb{E}[Y_t \mid \mathcal{F}_{t-1}] \bigr) 
    \;\le\; W_t,
\end{equation}
    
where we used $b_t \in [0,1]$ and \eqref{eq:no-edge-null}.
Thus $(W_t)_{t=0}^T$ is a nonnegative supermartingale with $W_0=1$, and Ville's inequality further implies
\[
    \mathbb{P}_{H_0}\!\Bigl(\sup_{t \le T} W_t \ge 1/\alpha\Bigr) \leq \alpha.
\]
In particular,
\[
    \mathbb{P}_{H_0}(W_T \ge 1/\alpha) \leq \alpha,
\]
which completes the proof.
\end{proof}

\section{Extended Case Study Results}\label{sup:case}
Extended results for Section~\ref{sec:synthetic} are shown in Fig.~\ref{fig:synthetic:err} (with the $\eta=5.0$ sub-batch of the same win rate results shown in Fig.~\ref{fig:synthetic:winrate}). Readers are encouraged to check out the affiliated code base for more experiment results and hyper-parameter configurations. The \texttt{Sim} and \texttt{Real} synthetic distributions adopted in Section~\ref{sec:case} are illustrated in Fig.~\ref{fig:syn-sim-dist} and Fig.~\ref{fig:syn-real-dist}, respectively. Fig.~\ref{fig:pnp-extend} shows some extended and complementary results to the pick-and-place tests presented in Section~\ref{sec:real_pnp}. 

\begin{figure*}[h]
    \centering
    \includegraphics[width=0.99\textwidth]{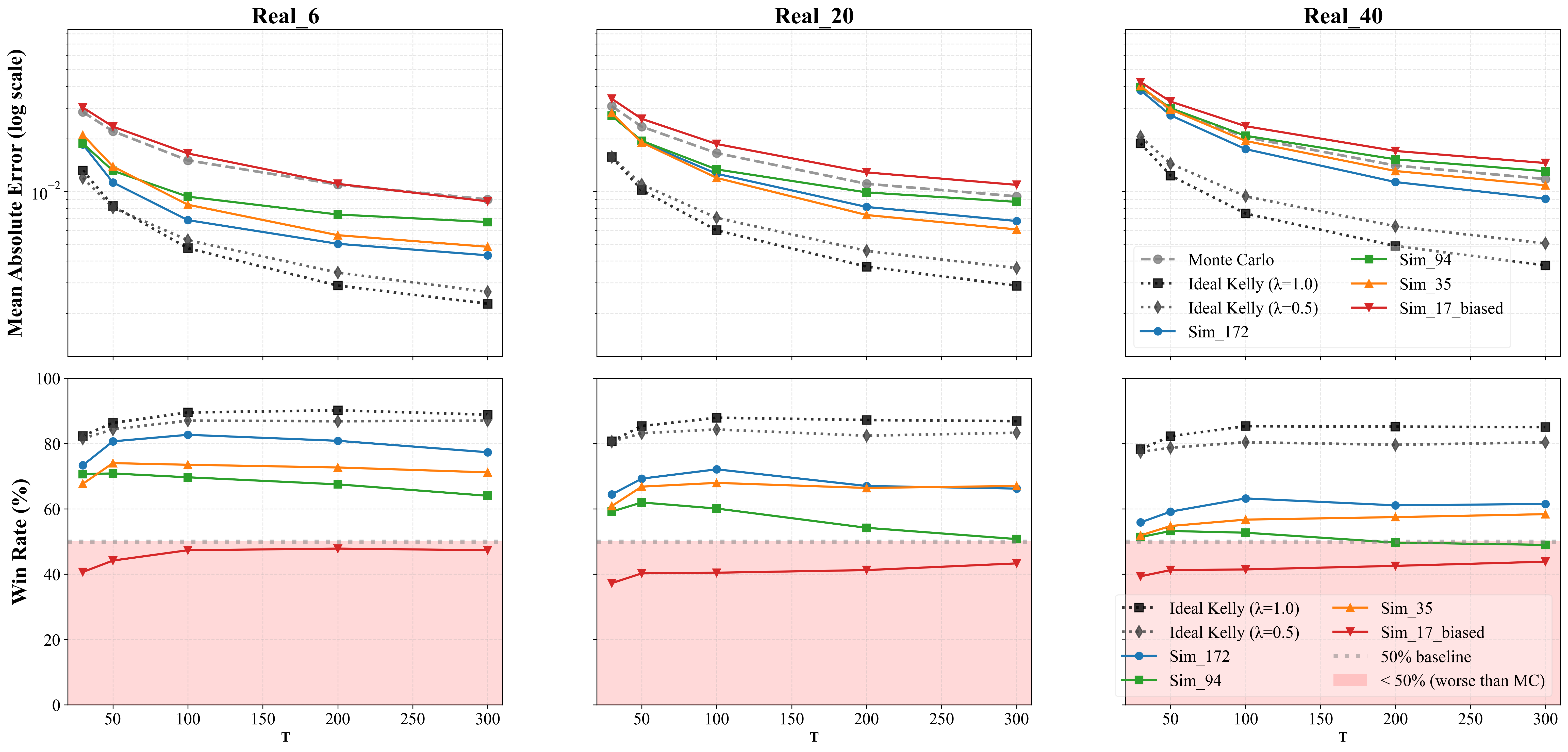}
    \caption{Extended performance comparison results among various methods discussed in Section~\ref{sec:synthetic} with a fixed learning rate parameter of $\eta=5.0$.}\label{fig:synthetic:err}
\end{figure*}

\begin{figure*}
    \centering
    \includegraphics[width=0.99\textwidth]{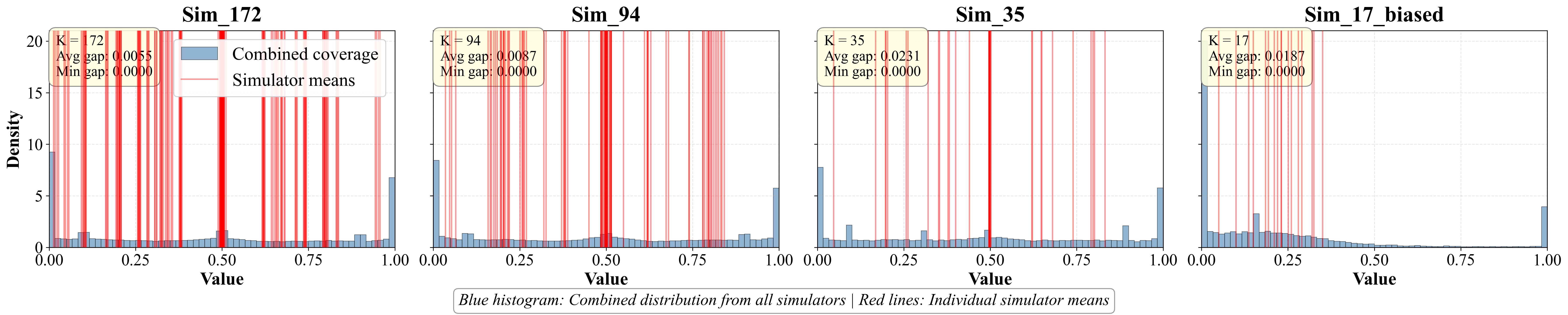}
    \caption{A combined illustration of different variants of banks of \texttt{Sim} distributions.}
    \label{fig:syn-sim-dist}
\end{figure*}
\begin{figure*}
    \centering
    \begin{subfigure}{0.99\textwidth}
        \includegraphics[width=0.99\textwidth]{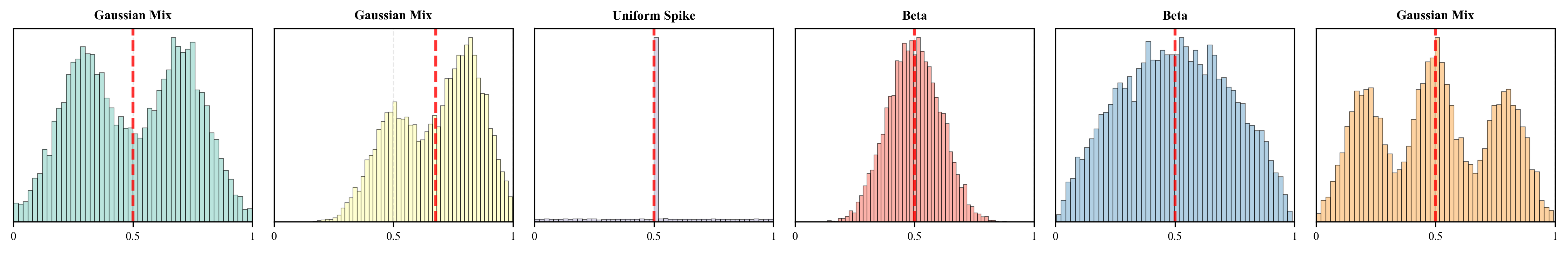}
        \caption{Real\_6}
        \label{fig:real-6}
    \end{subfigure}
    \begin{subfigure}{0.99\textwidth}
        \includegraphics[width=0.99\textwidth]{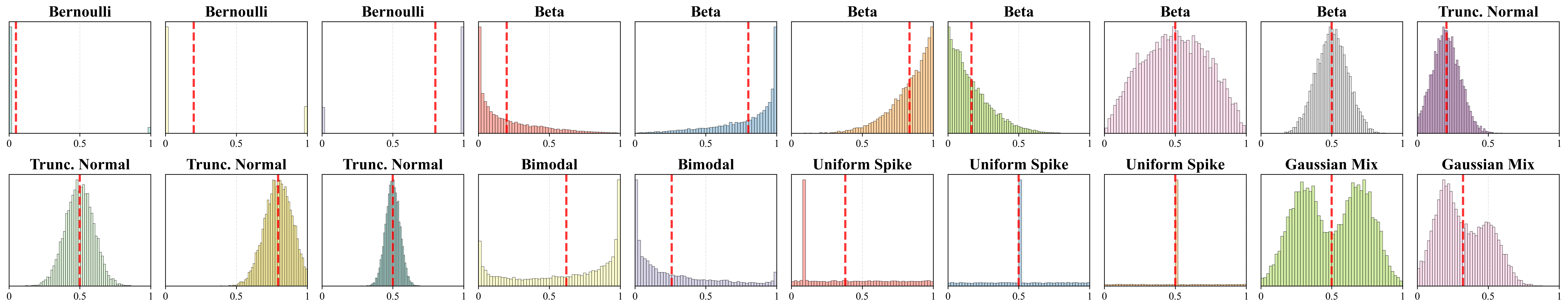}
        \caption{Real\_20}
        \label{fig:real-20}
    \end{subfigure}
    \begin{subfigure}{0.99\textwidth}
        \includegraphics[width=0.99\textwidth]{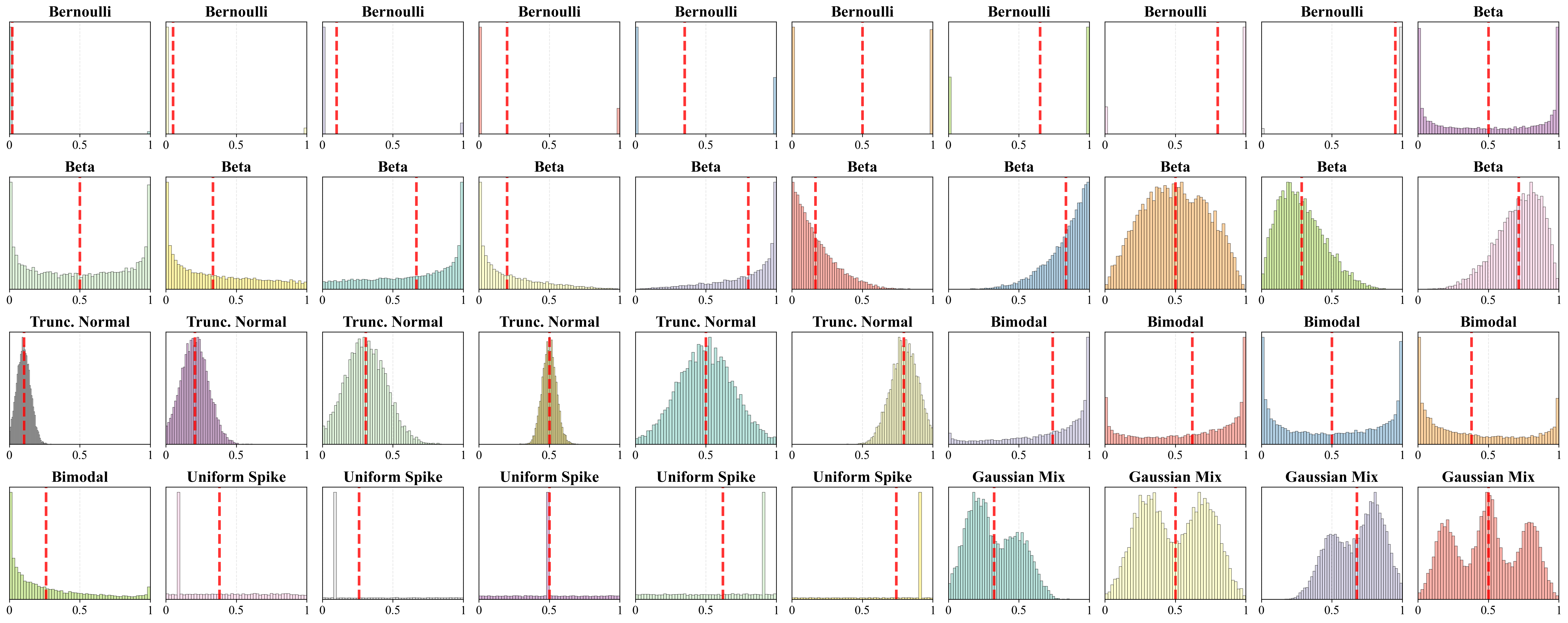}
        \caption{Real\_40}
        \label{fig:real-40}
    \end{subfigure}
    \caption{Synthetic \texttt{Real} distributions used in Section~\ref{sec:synthetic}.}
    \label{fig:syn-real-dist}
\end{figure*}

Comparing Fig.~\ref{fig:syn-sim-dist} and Fig.~\ref{fig:syn-real-dist} provides additional empirical insight into why \texttt{Sim\_172} dominates in Section~\ref{sec:synthetic}: its mass is concentrated near the most commonly observed \texttt{Real} distributions. The comparison also helps explain why \texttt{Sim\_35} outperforms \texttt{Sim\_94}, as the former exhibits slightly more uniform coverage of the relevant region, despite having a lower overall density.

\begin{figure}
    \centering
    \begin{subfigure}{.5\textwidth}
        \includegraphics[width=0.99\textwidth]{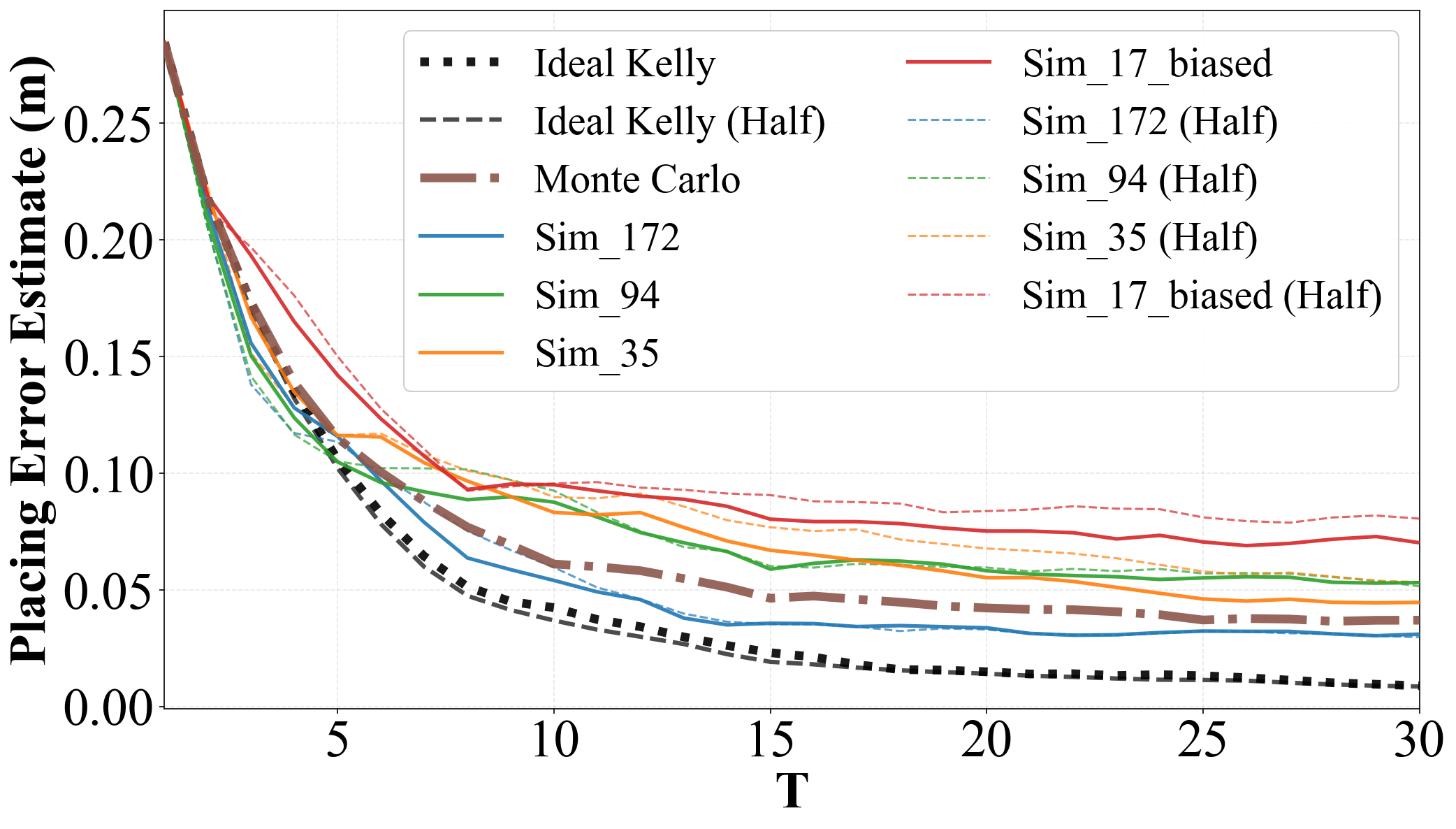}
        \caption{Extended experiments on another policy evaluated following the same procedure as described in Section~\ref{sec:real_pnp}. With the performance being significantly different from the one shown in the paper, \texttt{Sim\_172} becomes the most well-performed approximated Kelly variants.}\label{fig:policy2_pnp}
    \end{subfigure}%
    \hfill
    \begin{subfigure}{.3\textwidth}
        \includegraphics[width=0.99\textwidth]{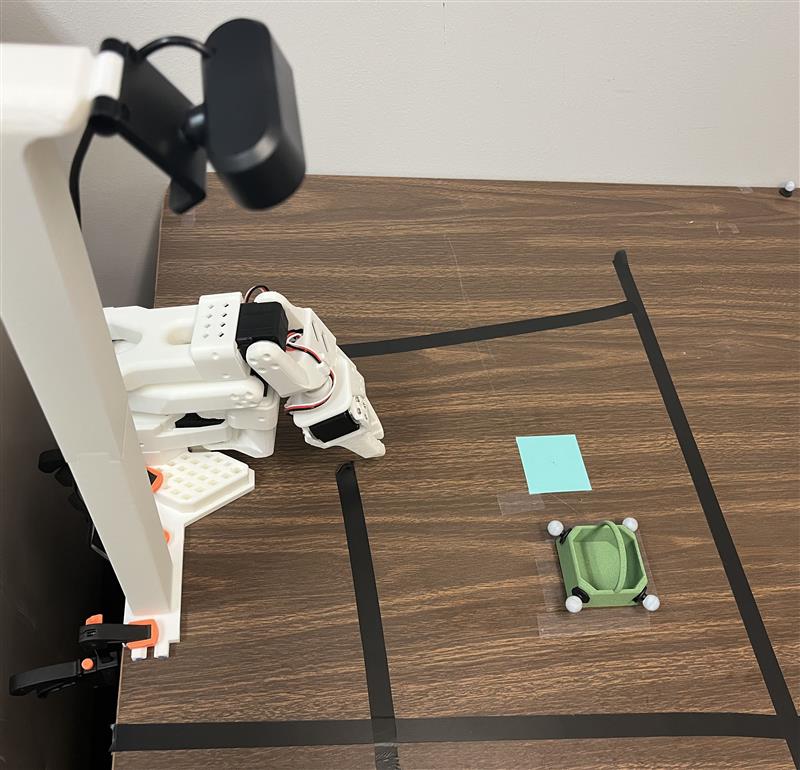}
        \caption{The real-world experiment setup of the pick-and-place manipulation accuracy testing discussed in Section~\ref{sec:real_pnp}.}\label{fig:pnp_setup}
    \end{subfigure}
    \caption{Extended and complementary results to Section~\ref{sec:real_pnp}.}\label{fig:pnp-extend}
\end{figure}

\subsection{Comparisons with a PPI variant~\cite{badithela2025reliable}}

\begin{figure}
    \centering
    \includegraphics[width=0.5\linewidth]{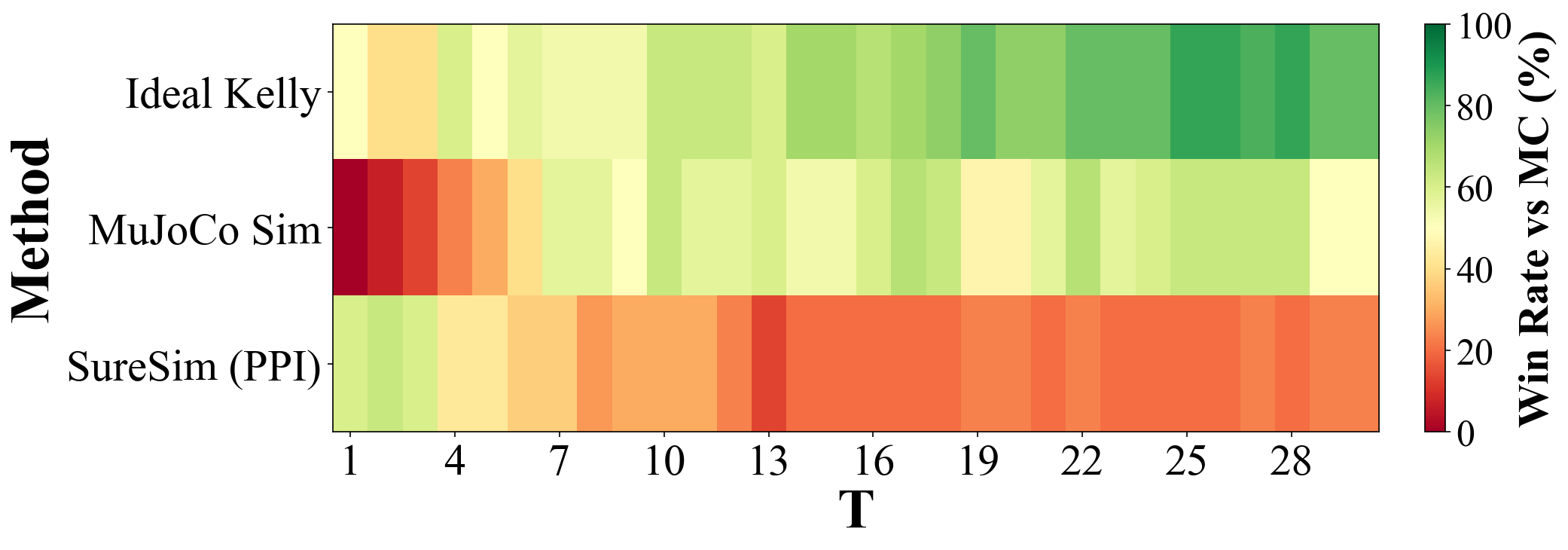}
    \caption{A win-rate comparison with SureSim~\cite{badithela2025reliable} on the locomotion task in Section~\ref{sec:sim_loco} is shown under the same fair experimental settings used in our reproduction.}
    \label{fig:suresim}
\end{figure}

A comparison with SureSim~\cite{badithela2025reliable}, a representative PPI-based method, is illustrated in Fig.~\ref{fig:suresim}. Since the original SureSim work does not provide an official release of its source code, we implemented our own best-effort reproduction following the descriptions in the paper. The case considered is the one in Section~\ref{sec:sim_loco}. Note that the synthetic examples in Section~\ref{sec:synthetic} are not compatible with SureSim, since no pairwise sim-real match exists by construction. The cases in Section~\ref{sec:sim_loco} are also incompatible, as there is no matched simulator available in the first place.

In Section~\ref{sec:sim_loco}, the MuJoCo-based approximated Kelly approach uses a bank of $10$ expert simulators, where each expert constructs its individual estimate from $5$ simulator samples, resulting in a total of $50$ simulator samples per estimate. To ensure a fair comparison between the MuJoCo-based approximated Kelly method and the SureSim framework, we allocate the same total simulator budget to SureSim. Specifically, SureSim uses up to $30$ simulator samples for pairwise testing to estimate the bias (matching the $30$ ``real'' samples), and an additional $20$ simulator samples for its internal estimation procedure.

Under this configuration, as shown in Fig.~\ref{fig:suresim}, SureSim underperforms both the standard Monte Carlo estimator and the proposed approximated betting method in terms of estimation accuracy. In a slightly unfair comparison by allowing SureSim to have significantly more sim-real pairwise tests or more simulator samples (e.g., changing the \texttt{N\_ADDITIONAL} constant in the released code in \texttt{locomotive\_tracking/demo\_suresim.py} from 20 to 3200), one would have a win rate better than Monte Carlo and on par with approximated Kelly betting methods.

We further emphasize, however, the following important considerations; for these reasons, this additional experiment is provided for completeness but was not included in the main paper, as the page limit would not allow a sufficiently careful and fair discussion of these nuances:
\begin{enumerate}
    \item The sim-real pairwise testing required by SureSim may be limited by the relatively small number of samples available in our setting (30 samples), and the additional sim-only samples (20 samples) may also be insufficient to fully realize its potential (as mentioned above). Nevertheless, this allocation reflects the most balanced and fair use of the available simulator budget for our comparison.
    \item SureSim involves a larger number of hyperparameters that may require careful tuning; in our reproduction, we did not perform extensive hyperparameter optimization.
    \item A primary strength of SureSim (and PPI-based methods more broadly) lies not in producing the most accurate point estimate of the mean, but in providing confidence intervals with guaranteed coverage. From this perspective, direct point-estimate comparison may not fully reflect its intended use, though it remains the most practical basis for comparison in our setting.
    \item SureSim is primarily designed around a single simulator and relies on correlation-based adjustments, whereas the proposed Kelly-style betting variants naturally accommodate and benefit from a diverse bank of simulators.
    \item The two approaches are not mutually exclusive. As discussed in the paper, PPI-style bias correction and betting-based variance reduction address complementary aspects of the sim-to-real inference problem and could potentially be combined in future work.
\end{enumerate}

\subsection{Comparisons with IS}
The practical implementation of IS (importance sampling) is highly case-specific. In this study we contextualize the performance of our Kelly betting approach comparing against the ``theoretically'' optimal IS for mean estimation across the synthetic distributions studied in Section~\ref{sec:synthetic}. 

For the mean estimation of~\eqref{eq:mu}, we sample from a proposal importance distribution $q(x)$ and compute the estimator by~\eqref{eq:is}. The variance-minimizing proposal distribution for estimating $\mathbb{E}_p[f(X)]$ is commonly known as~\cite{asmussen2007stochastic,o2018scalable}:
\begin{equation}
    q^*(x) = \frac{|\psi(x) - \mathbb{E}_p[\psi(X)]| \cdot p(x)}{\mathbb{E}_p[\psi(X) - \mathbb{E}_p[\psi(X)]|]}.
\end{equation}

For mean estimation of the presented synthetic distributions where $\psi(x) = x$, this becomes:
\begin{equation}
    q^*(x) = \frac{|x - \mu| \cdot p(x)}{Z}, \quad Z = \mathbb{E}_p[|X - \mu|]
\end{equation}
This is the \emph{zero-variance} importance distribution in the asymptotic sense.

\begin{figure}
    \centering
    \includegraphics[width=0.5\linewidth]{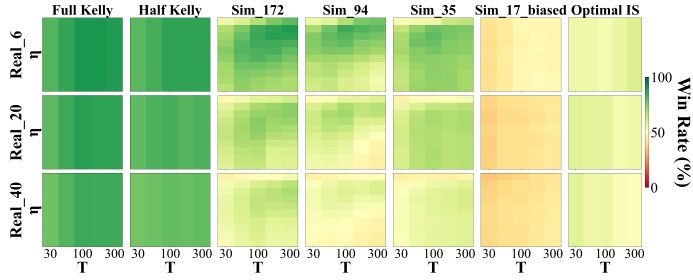}
    \caption{Extending the case studies of Fig.~\ref{fig:synthetic:winrate} to comparisons against the ``optimal'' IS settings.}
    \label{fig:is}
\end{figure}

For each distribution in \texttt{Real\_6}, \texttt{Real\_20}, and \texttt{Real\_40}, we sample from its corresponding $q^*(x)$ using rejection sampling with a uniform proposal on $[0,1]$. Each trial collects $T=300$ samples and is repeated over 100 random seeds. Win rates against Monte Carlo follow the same protocol as in Section~\ref{sec:synthetic}. The resulting IS performance is reported as an extended column from Fig.~\ref{fig:synthetic:winrate} in Fig.~\ref{fig:is}. Note that this setup gives IS a strong oracle advantage (even stronger than ideal Kelly): it has access to the true mean $\mu$ and the target distribution $p$.

Despite this oracle information, optimal IS achieves only $\sim$50-55\% win rate against standard Monte Carlo. In contrast, our Kelly betting approach achieves 70-100\% win rates (ideal Kelly) and 60-80\% win rates (approximated Kelly with simulator banks), substantially outperforming optimal IS. We attribute this discrepancy to the following factors:
\begin{enumerate}
    \item The self-normalized IS estimator~\eqref{eq:is} is biased at finite sample sizes, even when $q=q^*$. The zero-variance guarantee is asymptotic: while variance vanishes as $n\rightarrow\infty$, both bias and variance can remain non-negligible for practical budgets (here $n\leq 300$).
    \item Unlike IS, which draws samples from a fixed proposal, Kelly betting is sequential and adaptive. This adaptivity allows it to incorporate early outcomes and progressively allocate weight toward uncertainty reduction.
\end{enumerate}
As discussed in the main paper, the proposed Kelly-style betting mechanism is not intended to replace IS or debiasing methods such as PPI. Rather, by viewing sim-to-real performance evaluation from a complementary perspective that explicitly values simulator-derived predictive edge, it has strong potential to integrate with IS/PPI in hybrid variance-reduction pipelines; we leave such integrations to future work.

\section{Terminology from betting and finance}\label{sup:terminology}
Throughout this paper we adopt a few standard terms from the literature on sequential betting and finance. For clarity, we summarize them here in plain language.
\begin{itemize}
    \item \textbf{Bet}: A ``bet'' refers simply to choosing how strongly to rely on a particular prediction at a given round.  In our setting, placing a bet corresponds to allocating a fraction $b_t\in[0,1]$ of our testing budget toward the prediction that the next real-world outcome will exceed (or fall below) the current threshold. There is no monetary interpretation required. This is merely a device for quantifying confidence.

    \item \textbf{Payoff}: The ``payoff'' $Y_t$ is an abstract signal indicating whether the prediction at over
    round $t$ was helpful or misleading. A positive payoff means the prediction pointed in the correct direction (by a regressional margin as suggested in Algorithm~\ref{alg:abstract_betting}); a negative payoff means it did not. In the performance evaluation terms in this paper, $Y_t$ measures whether the simulator-informed guess about the next outcome was directionally correct.

    \item \textbf{Edge}: The ``edge'' of a betting strategy refers to having a systematic predictive advantage, i.e., $\mathbb{E}[Y_t\mid\mathcal{F}_{t-1}]>0$. An edge means that, on average, the information being used (e.g., simulator predictions) helps us anticipate real-world outcomes better than chance. ``No edge'' simply means no useful predictive signal.

    \item \textbf{Wealth}: ``Wealth'' $W_t$ is a bookkeeping variable used to track the cumulative success or failure of the bets. Importantly, \emph{wealth is not a weight on simulators}; it is a diagnostic quantity that summarizes how informative the betting strategy has been so far.  The key theoretical fact is that wealth forms an $e$-process under a no-edge null hypothesis (Theorem~\ref{thm:wealth-evalue}).
\end{itemize}

\newpage
\section{}\label{apx:nomenclature}

\nomenclature[B]{$P$}{Real-world probability distribution}
\nomenclature[B]{$Q_{\theta}$}{Simulator distribution parameterized by $\theta$}
\nomenclature[B]{$\theta$}{Simulator parameter}
\nomenclature[B]{$\Theta$}{Parameter space of $\theta$}
\nomenclature[B]{$\mathcal{X}$}{Outcome space}
\nomenclature[B]{$\psi(\cdot)$}{Performance-measure outcome function}
\nomenclature[B]{$\mu$}{Target mean, $\mu \triangleq \mathbb{E}_{x\sim P}[\psi(x)]$}

\nomenclature[B]{$\hat{\mu}_{\mathrm{MC}}$}{Monte Carlo estimator of $\mu$}
\nomenclature[B]{$n$}{Number of real-world i.i.d. samples used in various estimators}
\nomenclature[B]{$\hat{\mu}_{\mathrm{IS}}$}{Importance-sampling estimator}
\nomenclature[B]{$P'$}{Auxiliary (importance) distribution used for importance sampling}
\nomenclature[B]{$p(\cdot)$}{Density / mass function of $P$ (w.r.t. a shared base measure)}
\nomenclature[B]{$p'(\cdot)$}{Density / mass function of $P'$}
\nomenclature[B]{$\hat{\mu}_{\mathrm{BC}}$}{Bias-correction based estimator of $\mu$}
\nomenclature[B]{$b_{\theta}(\cdot)$}{Bias-correction term added to implicit/explicit simulator outputs}

\nomenclature[C]{$x_t$}{Real-world sample at round $t$}
\nomenclature[C]{$y_t$}{Observed scalar outcome, typically $y_t=\psi(x_t)$}
\nomenclature[C]{$T$}{Number of real-world rounds/trials/samples (horizon)}
\nomenclature[C]{$\mathcal{F}_{t-1}$}{Information available up to round $t-1$}
\nomenclature[C]{$\sigma_t^2$}{Conditional variance, $\mathrm{Var}(\psi(x_t)\mid\mathcal{F}_{t-1})$}
\nomenclature[C]{$\kappa_{t-1}$}{Predictive edge factor (an $\mathcal{F}_{t-1}$-measurable signal term)}

\nomenclature[D]{$\tau_{t-1}$}{Running reference/baseline used in payoff definition}
\nomenclature[D]{$B_t$}{Bet direction indicator (e.g., sign of predicted advantage)}
\nomenclature[D]{$b_t$}{Bet size (stake) at round $t$}
\nomenclature[D]{$Y_t$}{Payoff at round $t$}
\nomenclature[D]{$\lambda$}{Kelly fraction (risk-scaling parameter), $\lambda\in(0,1]$}
\nomenclature[D]{$w_t$}{Normalized bet weight (e.g., $w_t=b_t/\sum_{j=1}^T b_j$)}
\nomenclature[D]{$\hat{\mu}_{\mathrm{BW}}$}{Bet-weighted estimator of $\mu$}
\nomenclature[D]{$C$}{Constant for Taylor-expanded small-stake Kelly rule}
\nomenclature[D]{$W$}{Wealth}

\nomenclature[D]{$K$}{Number of simulators in the simulator bank}
\nomenclature[D]{$k$}{Simulator index, $k\in\{1,\dots,K\}$}
\nomenclature[D]{$\mu^{(k)}$}{Mean prediction associated with simulator $k$}
\nomenclature[D]{$(\sigma^{(k)})^2$}{Variance prediction associated with simulator $k$}
\nomenclature[D]{$s_t^{(k)}$}{Cumulative score of simulator $k$ up to round $t$}
\nomenclature[D]{$\pi_t^{(k)}$}{Trust weight of simulator $k$ at round $t$}
\nomenclature[D]{$\eta$}{Learning rate (inverse-temperature) in score/trust updates}
\nomenclature[D]{$\ell(\cdot)$}{Gaussian log-score used to evaluate predictions}
\nomenclature[D]{$m_t$}{Mixture predictive mean (trust-weighted)}
\nomenclature[D]{$v_t$}{Mixture predictive variance proxy (trust-weighted)}

\nomenclature[F]{$W_t$}{Wealth process induced by sequential betting}
\nomenclature[F]{$\alpha$}{Significance level used in the wealth-based statistical statement}

\printnomenclature